\newcommand{\indep}{\rotatebox[origin=c]{90}{$\models$}}
\newcolumntype{M}[1]{>{\centering\arraybackslash}m{#1}}
\newcolumntype{N}{@{}m{0pt}@{}}
\begin{document}

\title{Gaussian Lower Bound for the Information Bottleneck Limit}

\author{\name Amichai Painsky \email amichai.painsky@mail.huji.ac.il 
       \AND
       \name Naftali Tishby \email tishby@cs.huji.ac.il \\
        \addr School of Computer Science and Engineering and\\
       The Interdisciplinary Center for Neural Computation\\
       The Hebrew University of Jerusalem\\
	Givat Ram, Jerusalem 91904, Israel}

\editor{TBD}

\maketitle

\begin{abstract}
The Information Bottleneck (IB) is a conceptual method for extracting the most compact, yet informative, representation of a set of variables, with respect to the target. It generalizes the notion of minimal sufficient statistics from classical parametric statistics to a broader information-theoretic sense. The IB curve defines the optimal trade-off between representation complexity and its predictive power. Specifically, it is achieved by minimizing the level of mutual information (MI) between the representation and the original variables, subject to a minimal level of MI between the representation and the target. This problem is shown to be in general NP hard. One important exception is the multivariate Gaussian case, for which the Gaussian IB (GIB) is known to obtain an analytical closed form solution, similar to Canonical Correlation Analysis (CCA). In this work we introduce a Gaussian lower bound to the IB curve; we find an embedding of the data which maximizes its ``Gaussian part", on which we apply the GIB.  This embedding provides an efficient (and practical) representation of any arbitrary data-set (in the IB sense), which in addition holds the favorable properties of a Gaussian distribution. Importantly, we show that the optimal Gaussian embedding is bounded from above by non-linear CCA.  This allows a fundamental limit for our ability to Gaussianize arbitrary data-sets and solve complex problems by linear methods.

\end{abstract}

\begin{keywords}Information Bottleneck, Canonical Correlations, ACE, Gaussianization, Mutual Information Maximization, Infomax
\end{keywords}

\section{Introduction}
\label{intro}
The problem of extracting the relevant aspects of complex data is a long standing staple in statistics and machine learning. The Information Bottleneck (IB) method, presented by \cite{tishby1999information}, approaches this problem by extending its classical notion to a broader information-theoretic setup. Specifically, given the joint distribution of a set of explanatory variables $\underline{X}$ and a target variable $\underline{Y}$ (which may also be of a higher dimension), the IB method strives to find the most compressed representation of $\underline{X}$, while preserving information about $\underline{Y}$. Thus, $\underline{Y}$ implicitly regulates the compression of $\underline{X}$, so that its compressed representation maintains a level of relevance as explanatory variables with regards to $\underline{Y}$. The IB problem is formally defined as follows:   

\begin{equation} \label{IB problem}
\begin{aligned}
& {\min_{P(\underline{T}|\underline{X})}}
& & I(\underline{X};\underline{T}) \\
& \text{subject to}
& & I(\underline{T};\underline{Y}) \geq I_Y
\end{aligned}
\end{equation}
where $\underline{T}$ is the compressed representation of $\underline{X}$ and the minimization is over the mapping of $\underline{X}$ to $\underline{T}$, defined by the conditional probability $P(\underline{T}|\underline{X})$. Here, $I_Y$ is a constant parameter that sets the level of information to be preserved between the compressed representation and the target. Solving this problem for a range of $I_Y$ values defines the \textit{IB curve} -- a continuous concave curve which demonstrates the optimal trade-off between representation complexity (regarded as $I(\underline{X};\underline{T})$) and predictive power ($I(\underline{T};\underline{Y})$). 

The IB method showed to be a powerful tool in a variety of machine learning domains and related areas \citep{slonim2000document,friedman2001multivariate,sinkkonen2002clustering,slonim2005information,hecht2009speaker}.  
It is also applicable to other fields such as neuroscience \citep{schneidman2001analyzing} and optimal control \citep{tishby2011information}.  Recently, \cite{tishby2015deep} and \cite{shwartz2017opening} demonstrated its abilities in analyzing and optimizing the performance of deep neural networks.  

Generally speaking, solving the IB problem (\ref{IB problem}) for an arbitrary joint distribution is not a simple task. In the introduction of the IB method, \cite{tishby1999information} defined a set of self-consistent equations which formulate the necessary conditions for the optimal solution of (\ref{IB problem}). Further, they provide an iterative Arimoto-- Blahut like algorithm which shows to converge to local optimum. In general, these equations do not hold a tractable solution and are usually approximated by different means \citep{slonim2002information}. An extensive attention was given to the simpler categorical setup, where the IB curve is somewhat easier to approximate. Here, $\underline{X}$ and $\underline{Y}$ take values on a finite set and $\underline{T}$ represents (soft and informative) clusters of $\underline{X}$ (REF). Naturally, the IB problem also applies for continuous variables. In this case, approximating the solution to the self-consistent equations is even more involved. A special exception is the Gaussian case, where $\underline{X}$ and $\underline{Y}$ are assumed to follow a jointly normal distribution and the \textit{Gaussian IB}  problem (GIB) is analytically solved by linear projections to the canonical correlation vector space \citep{chechik2005information}. However, evaluating the IB curve for arbitrary continuous random variables is still considered a highly complicated task where most attempts focus on approximating or bounding it \citep{rey2012meta,chalk2016relevant}. A detailed discussion regarding currently known methods is provided in the following section.

In this work we present a novel Gaussian lower bound to the IB curve, which applies to all types of random variables (continuous, nominal and categorical). Our bound strives to maximize the ``jointly Gaussian part" of the data and apply the analytical GIB to it. Specifically, we seek for two transformations, $\underline{U}=\phi(\underline{X})$ and $\underline{V}=\psi(\underline{Y})$ so that $\underline{U}$ and $\underline{V}$ are highly correlated and ``as jointly Gaussian as possible". In addition, we ask that the transformations preserve as much information as possible between $\underline{X}$ and $\underline{Y}$. This way, we maximize the portion of the data that can be explained by linear means, $I(\underline{U};\underline{V})\leq I(\underline{X};\underline{Y})$, specifically using the GIB. 

In fact, our results goes beyond the specific context of the information bottleneck. In this work we tackle the fundamental question of linearizing non-linear problems. Specifically, we ask ourselves whether it is possible to ``push" all the information in the data to its second moments. This problem has received a great amount of attention over the years. For example, \cite{schneidman2006weak} discuss this problem in the context of neural networks; they provide preliminary evidence that in the vertebrate retina, weak pairwise correlations may describe the collective (non-linear) behavior of neurons. In this work, we provide both fundamental limits and constructive algorithms for maximizing the part of the data that can be optimally analyzed by linear means. This basic property holds both theoretical and practical implications, as it defines the maximal portion which allows favorable analytical properties in many applications. Interestingly, we show that even if we allow the transformations $\underline{U}=\phi(\underline{X})$ and $\underline{V}=\psi(\underline{Y})$ to increase the dimensions of $\underline{X}$ and $\underline{Y}$, our ability to linearize the problem is still limited, and governed by the non-linear canonical correlations \citep{breiman1985estimating} of the original variables.

Our suggested approach may also be viewed as an extension of the \textit{Shannon lower bound} \citep{cover2012elements}, for evaluating the mutual information. In his seminal work, Shannon provided an analytical Gaussian lower bound for the generally involved rate distortion function. He showed that the rate distortion function $R(D)$ can be bounded from below by $h(X)-\frac{1}{2}\log(2\pi e D)$ where $X$ is the compressed source, $h(X)$ is its corresponding deferential entropy and $\frac{1}{2}\log(2\pi e D)$ is the differential entropy of an independent Gaussian noise with a maximal distortion level $D$. This bound holds some favorable theoretical properties  \citep{cover2012elements} and serves as one of the most basic tools for approximating the  rate distortion function to this very day. In this work, we use a somewhat similar rationale and derive a Gaussian lower bound for the mutual information of two random variables, which holds an analytical expression just like the Shannon's bound. We then extend our result to the entire IB curve and discuss its theoretical properties and practical considerations.

The rest of this manuscript is organized as follows: In Section \ref{Related work} we review previous work on the IB method for continuous random variables. Section \ref{Problem Formulation} defines our suggested lower bound and formulates it as an optimization problem. We then propose a set of solutions and bounds to this problem, as we distinguish between the easier univariate case (Section \ref{The 1-D case}) and the more involved multivariate case (Section \ref{multivariate case}). Finally, in Section \ref{Gaussian lower bound for the Information Bottleneck Curve} we extend our results to the entire IB curve.

\section{Related work} 
\label{Related work}
As discussed in the previous section, solving the IB problem for continuous variables is in general a difficult task. A special exception is where $\underline{X}$ and $\underline{Y}$ follow a jointly normal distribution.  \cite{chechik2005information}  show that in this case, the Gaussian IB problem (GIB) is solved by a noisy linear projection, $T=A\underline{X} + \underline{\zeta}$. Specifically, assume that $\underline{X}$ and $\underline{Y}$ are of dimensions $n_X$ and $n_Y$ respectively and denote the covariance matrix of $\underline{X}$ as $C_{\underline{X}}$ while the conditional covariance matrix of $\underline{X}|\underline{Y}$ is  $C_{\underline{X}|\underline{Y}}$. Then,  $\underline{\zeta}$ is a Gaussian random vector with a zero mean and a unit covariance matrix, independent of $\underline{X}$. The matrix $A$ is defined as follows: 

\begin{equation}
A=\left\{ \begin{array}{cc}
[0^T; \dots; 0^T] & 0 \leq \beta \leq \beta_1^C\\
{}[a_1v_1^T ; 0^T; \dots ; 0^T] & \beta_1^C \leq \beta \leq \beta_2^C\\
{}[a_1v_1^T ; a_2v_2^T; 0^T; \dots ; 0^T] & \beta_2^C \leq \beta \leq \beta_3^C\\
\vdots & \vdots
\end{array}\right\} .
\end{equation} 
where $\{v_1^T,v_2^T,\dots,v_{n_x}^T \}$ are the left eigenvectors of $C_{\underline{X}|\underline{Y}} C_{\underline{X}}^{-1}$, sorted by
their corresponding ascending eigenvalues $\lambda_1, \dots , \lambda_{n_X}$, $\beta_I^C=\frac{1}{\lambda_i}$ are the critical $\beta$ values, $a_i$ are defined by $a_i=\sqrt{\frac{\beta(1-\lambda_i)-1}{\lambda_i r_i}} $. $r_i=v_i^T C_{\underline{X}} v_i$ and $0^T$ is an $n_X$ row vector of zeros. Notice that the critical values $\beta$ correspond to the slope of the IB curve, as they represent the Lagrange multipliers of the IB problem. 

Unfortunately, this solution is limited to jointly Gaussian random variables. In fact, it can be shown that a closed form analytical solution (for continuous random variables) may only exist under quite restrictive assumptions on the underlaying distribution. Moreover, as the IB curve is so challenging to evaluate in the general case, most known attempts either focus on extending the GIB to other distributions under varying assumptions, or approximate the IB curve by different means.

\cite{rey2012meta} reformulate the IB problem in terms of probabilistic copulas. They show that under a Gaussian copula assumption, an analytical solution (which extends the GIB) applies to joint distributions with arbitrary marginals. This formulation provides several interesting insights on the IB problem. However, its practical implications are quite limited as the Gaussian copula assumption is very restrictive. In fact, it implicitly requires that the joint distribution would maintain a Gaussian structure. As we show in the following sections, this assumption makes the problem significantly easier and does not hold in general.

\cite{chalk2016relevant} provide a lower bound to the IB curve by using an approximate variational scheme, analogous to variational expectation maximization. Their method relaxes the IB problem by restricting the class of distributions, $P(\underline{Y}|\underline{T})$ and  $P(\underline{T})$ to a set of parametric models. This way, the relaxed IB problem may be solved in EM-like steps; their suggested algorithm iteratively maximize the objective over the mappings (for fixed parameters) and then maximize the set of parameters, for fixed mappings. \cite{chalk2016relevant} show that this method can be effectively applied to ``sparse" data in which $\underline{X}$ and $\underline{Y}$ are generated by sparsely occurring latent features. However, in the general case, their suggested bound strongly depends on the assumption that the chosen parametric models provide reasonable approximations for the optimal distributions. This assumption is obviously quite restrictive. Moreover, it is usually difficult to validate, as the optimal distributions are unknown.  \cite{kolchinsky2017nonlinear} take a somewhat similar approach, as they suggest a variational upper bound to the IB curve. The main difference between the two methods relies on the variational approximation of objective, $I(\underline{X};\underline{Y})$. However, they are both prune to the same difficulties stated above. 

\cite{alemi2016deep} propose an additional variational inference method to construct a lower bound to the IB curve. Here, they re-parameterize the IB problem followed by Monte Carlo sampling, to get an unbiased estimate of the IB objective gradient. This allows them to apply deep neural networks in order to parameterize any given distribution. However, this method fails to provide guarantees on the obtained bound, as a result of the suggested stochastic gradient decent optimization approach.
 
\cite{achille2016information} relax the bottleneck problem by introducing an additional \textit{total correlation} (TC) regularization term that strives to maximize the independence among the components of the representation $T$. They show that under the assumption that the Lagrange multipliers of the TC and MI constraints are identical, the relaxed problem may be solved by adding auxiliary variables. However, this assumption is usually invalid, and the suggested method fails to provide guarantees on difference between the obtained objective and original IB formulation.    

In this work we suggest a novel lower bound to the IB curve which provides both theoretical and practical guarantees. In addition, we introduce upper and lower bounds for our suggested solution that are very easy to attain. This way we allow immediate benchmarks to the IB curve using common off-shelf methods. 

\section{Problem formulation} \label{Problem Formulation}

Throughout this manuscript we use the following standard notation: underlines denote vector quantities, where their respective components are written without underlines but with index. For example, the components of the $n$-dimensional vector $\underline{X}$ are $X_1, X_2, \dots X_n$. Random variables are denoted with capital letters while their realizations are denoted with the respective lower-case letters. The mutual information of two random variables is defined as $I(\underline{X};\underline{Y})=h(\underline{X})+h(\underline{Y})-h(\underline{X},\underline{X})$ where $h(\underline{X})=-\int_{\underline{X}} f_{\underline{X}}(\underline{x})\log f_{\underline{X}}(\underline{x}) d\underline{x}$ is the differential entropy of $\underline{X}$ and $f_{\underline{X}}(\underline{x})$ is its probability density function.

We begin by introducing a Gaussian lower bound to the mutual information $I(\underline{X};\underline{Y})$. We then extend our result to the entire IB curve.


\subsection{Problem statement}
Let $\underline{X}\in \mathbb{R}^{d_x}, \underline{Y}\in \mathbb{R}^{d_y}$ be two multivariate random vectors with a joint cumulative distribution function (CDF) $F_{XY}(x,y)$ and mutual information $I(\underline{X}, \underline{Y})$.   In the following sections we focus on bounding  $I(\underline{X}, \underline{Y})$ from below with an analytical expression. Let $\underline{U}=\phi(\underline{X})$ and $\underline{V}=\psi(\underline{Y})$ be two transformations of $\underline{X}$ and $\underline{Y}$, respectively. 
Assume that $\underline{U}$ and $\underline{V}$ are \textit{separately normal distributed}. This means that $\underline{U} \sim N\left(\mu_{\underline{U}},C_{\underline{U}}\right)$ and $\underline{V} \sim N\left(\mu_{\underline{V}},C_{\underline{V}}\right)$ but the vector $[\underline{U}, \underline{V}]^T$ is not necessarily normal distributed. This allows us to derive the following fundamental inequality

\begin{align}
\label{basic_inequality}
I(\underline{X}, \underline{Y}) \geq &I(\underline{U}, \underline{V})=h(\underline{U})+h(\underline{V})-h(\underline{U},\underline{V}) \geq \\\nonumber 
&h(\underline{U})+h(\underline{V})-h(\underline{U}_{jg},\underline{V}_{jg}) = \frac{1}{2} \log \left(\frac{\left|C_{[\underline{U},\underline{V}] }\right|}{|C_{\underline{U}}||C_{\underline{V}}|}   \right)
\end{align}
where the first inequality follows from the Data Processing lemma \citep{cover2012elements} and the second inequality follows from $\left[\underline{U}_{jg},\underline{V}_{jg}\right]^T$ being  jointly Gaussian (jg) distributed with the same covariance matrix as $\left[\underline{U},\underline{V}\right]^T$ , $C_{[\underline{U}_{jg},\underline{V}_{jg}]}=C_{[\underline{U},\underline{V}]}$, so that $h(\underline{U}_{jg},\underline{V}_{jg})\geq h(\underline{U},\underline{V})$ \citep{cover2012elements}. Notice that (\ref{basic_inequality}) can also be derived from an \textit{information geometry} (IG) view point, as shown by  \cite{cardoso2003dependence}.

Equality is attained in ($\ref{basic_inequality}$) iff $I(\underline{X}, \underline{Y}) = I(\underline{U}, \underline{V})$ (no information is lost in the transformation) and $\underline{U}=\phi(\underline{X})$, $\underline{V}=\psi(\underline{Y})$ are jointly normally distributed. In other words, in order to preserve all the information we must find $\phi$ and $\psi$ that  capture all the mutual information, and at the same time make $\underline{X}$ and $\underline{Y}$ jointly normal. This is obviously a complicated task as $\phi$ and $\psi$ only operate on $\underline{X}$ and $\underline{Y}$ separately. 
Therefore, we are interested in maximizing this lower bound as much as possible:
\begin{equation} \label{basic problem}
\begin{aligned}
& {\max_{\phi,\psi}}
& & \log \left(\frac{\left|C_{[\underline{U},\underline{V}] }\right|}{|C_{\underline{U}}||C_{\underline{V}}|}   \right) \\
& \text{subject to}
& & \underline{U}=\phi(\underline{X}) \sim N\left(0,C_{\underline{U}}\right) \\ 
& &&\underline{V}=\psi(\underline{Y}) \sim N\left(0,C_{\underline{V}}\right)
\end{aligned}
\end{equation}
In other words, we would like to maximize \cite{cardoso2003dependence} IG bound by applying two transformations, $\phi$ and $\psi$, to the original variables. This would allow us to achieve a tighter result. 

Notice that our objective is invariant to the means of  $\underline{U},  \underline{V}$ so they are chosen to be zero. In addition, it is easy to show that our objective is invariant to linear scaling of $\underline{U}, \underline{V}$. This means we can equivalently assume that 
$C_{\underline{U}}, C_{\underline{V}}$ are identity covariance matrices.As shown by \cite{kay1992feature} and others \citep{klami2005non,chechik2005information}, maximizing the objective of (\ref{basic problem}) is equivalent to maximizing the canonical correlations, $\text{cov}(\underline{U}_i,\underline{V}_i)$.
Therefore, our problem can be written as

\begin{equation} 
\label{multivariate_problem}
\begin{aligned}
& {\max_{\phi,\psi}}
& & \sum_{i=1}^{k} E\left(\underline{U}_{i} \underline{V} _{i}\right) \\
& \text{subject to}
& & \underline{U}=\phi(\underline{X}) \sim N\left(0,I\right) \\ 
& &&\underline{V}=\psi(\underline{Y}) \sim N\left(0,I\right)\\
\end{aligned}
\end{equation}
where $k=\min\{k_x,k_y\}$. This problem may also be viewed as a variant of the well-known CCA problem \citep{hotelling1936relations}, where we optimize over nonlinear transformations $\phi$ and $\psi$, and impose additional normality constraints. As in CCA, this problem can be solved iteratively by gradually finding the the optimal canonical components in each step (subject to the normality constraint), while maintaining orthogonality with the components that were previously found. For simplicity of the presentation we begin by solving (\ref{multivariate_problem}) in the univariate ($1$-D) case. Then, we generalize to the multivariate case. In each of these setups we present a solution to the problem, followed by simpler upper and lower bounds.  
\section{The univariate case}
\label{The 1-D case}
In the univariate case we assume that $d=k=1$. We would like to find $\phi,\psi$ such that  
\begin{equation} 
\begin{aligned}
\label{1-D problem}
& {\max_{\phi,\psi}}
& & \rho=E(U,V) \\
& \text{subject to}
& & U=\phi(X) \sim N(0,1) \\ 
& && V=\psi(Y)\sim N(0,1)\\
\end{aligned}
\end{equation}
As a first step towards this goal, let us relax our problem by replacing the normality constraint with simpler second order statistics constraints, 

\begin{equation}
\label{ACE problem} 
\begin{aligned}
& {\max_{\phi,\psi}}
& & \rho=E(U,V) \\
& \text{subject to}
& & U=\phi(X),\; E(U)=0, \;E(U^2)=1 \\ 
& && V=\psi(Y), \;E(V)=0, \;E(V^2)=1\\
\end{aligned}
\end{equation}
As mentioned above, this problem is a non-linear extension of CCA, which traces back to early work  by \cite{lancaster1963correlations}. As this problem is also a relaxed version of our original task (\ref{1-D problem}), it may serve us as an upper bound. This means that the optimum of (\ref{ACE problem}), denoted as $\rho_{ub}$,  necessarily bound from above  $\rho_*$, the optimum of (\ref{1-D problem}).

\subsection {Alternation Conditional Expectation (ACE)}
\label{ACE_section}
\cite{breiman1985estimating} show that the optimal solution to (\ref{ACE problem}) is achieved by a simple alternating conditional expectation procedure, named ACE. Assume that $\psi(Y)$ is fixed, known and satistfies the constraints. Then, we optimize (\ref{ACE problem})  only over $\phi$ and by Cauchy-Schwarz inequality, we have that
$$E(\phi(X)\psi(Y))=E_x \left(\phi(X)E(\psi(Y)|X)\right) \leq \sqrt{\text{var}(\phi(X))} \sqrt{\text{var}(E(\psi(Y)|X))} $$
with equality iff $\phi(X)=c\cdot E(\psi(Y)|X)$.  Therefore, choosing the constant $c$ to satisfy the unit variance constraint we achieve $\phi(X)=\frac{E(\psi(Y)|X)}{\sqrt{var(E(\psi(Y)|X))}}$. In the same manner we may fix $\phi(X)$ and attain $\psi(Y)=\frac{E(\phi(X)|Y)}{\sqrt{var(E(\phi(X)|Y))}}$. These coupled equations are in fact necessary conditions for the optimality of $\phi$ and $\psi$, leading to an alternating procedure in which at each step we fix one transformation and optimize the other. \cite{breiman1985estimating} prove that this procedure convergences to the global optimum using Hilbert space algebra. They show that the transformations $\phi$ and $\psi$ may be represented in a zero-mean and finite variance Hilbert space, while the conditional expectation projection is linear, closed, and shown to be self-adjoint and compact under mild assumptions. Then, the coupled equations may be formulate as an eigen problem in the Hilbert space, for which there exists a unique and optimal solution.

The following lemma defines a strict connection between the non-linear canonical correlations and the Gaussinized IB problem. 

\begin{lemma}
\label{negative_lemma}
Let $\rho_{ub}$ be the solution to (\ref{ACE problem}). If $I(X;Y)>-\log\left(1-\rho_{ub}^2\right)$, then there are no transformations $\phi, \psi$ such that $U=\phi(X)$ and $V=\psi(Y)$ are jointly normally distributed and preserve all of the mutual information, $I(X;Y)$.
\end{lemma}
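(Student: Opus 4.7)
The plan is to argue by contradiction. Suppose such transformations $\phi, \psi$ did exist: then $U=\phi(X)$ and $V=\psi(Y)$ are jointly Gaussian and $I(U;V)=I(X;Y)$. The key observation I would exploit is that, after a harmless standardization, such a pair is \emph{automatically feasible} for the ACE relaxation (\ref{ACE problem}). Therefore its correlation is bounded by $\rho_{ub}$, and the Gaussian mutual-information formula converts this correlation cap into a cap on $I(X;Y)$ that is incompatible with the hypothesis.

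Concretely, I would proceed in three short steps. First, \emph{standardize}: mutual information and joint normality are both invariant under coordinate-wise bijective affine maps, so by replacing $\phi$ and $\psi$ by affine rescalings I may assume $E(U)=E(V)=0$ and $\mathrm{Var}(U)=\mathrm{Var}(V)=1$ without altering $I(U;V)$. Degenerate variances would give $I(U;V)=0$, already a contradiction (the hypothesis forces $I(X;Y)>0$ whenever $\rho_{ub}\in(0,1)$, i.e.\ $-\log(1-\rho_{ub}^{2})>0$). Second, \emph{use ACE feasibility}: the standardized pair now satisfies the constraints of (\ref{ACE problem}); after flipping a sign if needed, $\rho := E(UV)\geq 0$, and hence $\rho\leq\rho_{ub}$ by the definition of $\rho_{ub}$. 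Third, \emph{invoke the Gaussian MI formula}: for a standardized jointly Gaussian pair, $I(U;V)=-\tfrac{1}{2}\log(1-\rho^{2})$. Chaining these three facts,
$$
I(X;Y)=I(U;V)=-\tfrac{1}{2}\log(1-\rho^{2})\;\leq\;-\tfrac{1}{2}\log(1-\rho_{ub}^{2})\;\leq\;-\log(1-\rho_{ub}^{2}),
$$
which contradicts the hypothesis $I(X;Y)>-\log(1-\rho_{ub}^{2})$.

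There is no real obstacle: the proof is essentially a single observation (information-preserving Gaussianization is ACE-feasible) plus the closed-form Gaussian mutual information. The only care required is the standardization step, which rests on the affine invariance of MI, and the borderline cases of zero variance or $\rho_{ub}=0$, which degenerate trivially as indicated above. Note also that the threshold $-\log(1-\rho_{ub}^{2})$ in the statement is not tight; the argument actually gives the sharper threshold $-\tfrac{1}{2}\log(1-\rho_{ub}^{2})$, so the stated hypothesis leaves considerable slack.
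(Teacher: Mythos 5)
Your proof is correct and follows essentially the same route as the paper's: any information-preserving jointly Gaussian pair is (after standardization) feasible for the relaxed problem (\ref{ACE problem}), so its correlation is capped by $\rho_{ub}$ and the Gaussian mutual-information formula yields the contradiction. Your closing remark about the slack is also apt --- the sharp threshold is $-\tfrac{1}{2}\log(1-\rho_{ub}^{2})$, which is in fact the quantity the paper uses in its numerical illustration of this lemma.
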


\begin{proof}
Let $\rho_{*}$ be the solution to (\ref{1-D problem}). As mentioned above, $\rho_{ub}\geq \rho_{*}$. Therefore, $I(X;Y)>-\log\left(1-\rho_{ub}^2\right)>-\log\left(1-\rho_{*}^2\right)$. This means that the inequality (\ref{basic_inequality}) cannot be achieved with equality. Hence, there are no transformations  $U=\phi(X)$ and $V=\psi(Y)$ so that $U$ and $V$ are jointly normal and preserve all of the mutual information, $I(X;Y)$.
\end{proof}
Lemma \ref{negative_lemma} suggests that if the optimal transformations of the relaxed problem (which can be obtained by ACE) fails to capture all the mutual information between $X$ and $Y$, then there are no transformations that can project $X$ and $Y$ onto jointly normal variables without losing information. Moreover, notice that the maximal level of correlation $\rho_{ub}$ cannot be further increased, even if we allow $\underline{U}=\phi(X)$ and $\underline{V}=\phi(Y)$ to reside in higher spaces. This means that Lemma \ref{negative_lemma} holds for any $\phi:R\rightarrow R^{k_u}$ and $\psi:R\rightarrow R^{k_v}$, such that $k_u,k_v \geq 0$. 

\subsection{Alternating Gaussinized Conditional Expectations (AGCE)}
\label{AGCE}
Let us go back to our original problem, which strives to maximize the correlation between $U$ and $V$, subject to marginal normality constraints (\ref{1-D problem}). Here we follow \cite{breiman1985estimating}, and suggest an alternating optimization procedure.

 Let us fix $\psi(Y)$ and optimize (\ref{1-D problem}) with respect to $\phi(X)$. As before, we can write the correlation objective as  $E(\phi(X)\psi(Y))=E_x \left(\phi(X)E(\psi(Y)|X)\right)$. Since $E(\phi(X)^2)$ is constrained to be equal to $1$, while $E\left(E(\psi(Y)|X)^2\right)$ is fixed, maximizing $E_x \left(\phi(X)E(\psi(Y)|X)\right)$ is equivalent to minimizing $E_x \left(\phi(X)-E(\psi(Y)|X)\right)^2$. For simplicity, denote $\bar{X} \equiv E(\psi(Y)|X)$. Then, our optimization problem can be reformulated as 

\begin{equation}
\label{AGCE problem} 
\begin{aligned}
& {\min_{\phi}}
& & E \left(\phi(\bar{X})-\bar{X}\right)^2 \\
& \text{subject to}
& & \bar{X} \sim F_{\bar{X}}\\
& && \phi(\bar{X}) \sim N(0,1) \\ 
\end{aligned}
\end{equation}
where $F_{\bar{X}}$ is the (fixed) CDF of $\bar{X} \equiv E(\psi(Y)|X)$. Notice that $\phi$ is necessarily a function of $\bar{X}$ alone (as opposed to $X$), for simple optimization considerations. 
Assuming that $\bar{X}$ and $U=\phi(\bar{X})$ are two separable metric spaces such that any probability measure on $\bar{X}$ (or $U$) is a Radon measure (i.e. they are Radon spaces), then (\ref{AGCE problem}) is simply an optimal transportation problem \citep{monge1781memoire} with a strictly convex cost function (mean square error). We refer to $\phi^*(\bar{X})$ that minimizes (\ref{AGCE problem}) as the optimal map.

The optimal transportation problem was presented by \cite{monge1781memoire} and has generated an important branch of mathematics. The problem originally studied by Monge was the following: assume we are given a pile of sand (in $\mathbb{R}^3$) and a hole that we have to completely fill up with that sand. Clearly the pile and the hole must have the same volume and different ways of moving the sand will give different costs of the operation. Monge wanted to minimize the cost of this operation. Formally, the optimal transportation problem is defined as 
$$\inf \left\{\int_{\bar{X}}c(\bar{X},\phi(\bar{X}))d\mu(\bar{X})  \Big| \phi_*(\mu)=\nu \right\}$$
where $\mu$ and $\nu$ are the probability measures of $\bar{X}$ and $U$ respectively, $c(\cdot,\cdot)$ is some cost function and $\phi_*(\mu)$ denotes the push forward of $\mu$ by the map $\phi$.
Clearly, (\ref{AGCE problem}) is a special case of the optimal transportation problem  where the  $\mu=F_{\bar{X}}$, $\nu$ is a standard normal distribution and the cost function is the euclidean distance between the two. 

Assume that $\bar{X} \in \mathbb{R}$ has finite $p^{th}$ moments for $1 \leq p < \infty$ and a strictly continuous CDF, $F_{\bar{X}}$ (that is $\bar{X}$ is a strictly continuous random variable). Then, \cite{rachev1998mass} show that the optimal map (which minimizes (\ref{AGCE problem})) is exactly $\phi^*(\bar{X})=\Phi^{-1}_N \circ  F_{\bar{X}} (\bar{X})$ where $\Phi^{-1}_N$ is the inverse CDF of a standard normal distribution. As shown by \cite{rachev1998mass}, the optimal map is unique and achieves 
\begin{equation}
\label{transportation_problem_loss} 
E \left(\left(\phi^*(\bar{X})-\bar{X}\right)^2\right) =\int_0^1 \left(F_{\bar{X}}(s)-  \Phi_N(s)  \right)^2 ds. 
\end{equation}  Notice that the optimal map may be generalized to the multivariate case, as discussed in the next Section. The solution to the optimal transportation problem is in fact the ``optimal projection" of our problem (\ref{AGCE problem}). Further, it allows us to quantify how much we lose from imposing the marginal normality constraint, compared with ACE's optimal projection.

Notice that the optimal map, $\phi^*(\bar{X})=\Phi^{-1}_N \circ  F_{\bar{X}} (\bar{X})$, is simply marginal Gaussianization of $\bar{X}$: applying $\bar{X}$'s CDF to itself results in a uniformly distributed random variable, while $\Phi^{-1}_N$ shapes this uniform distribution into a standard normal. In other words, while the optimal projection of $\psi(Y)$ on $X$ is its conditional expectation, the optimal projection under a normality constraint is simply a Gaussianization of the conditional expectation. The uniqueness of the optimal map leads to the following necessary conditions for an optimal solution to (\ref{1-D problem}),

\begin{align}
\label{necessary_conditions}
&\phi(X)=\Phi^{-1}_N \circ  F_{E(\psi(Y)|X)} (E(\psi(Y)|X))\\\nonumber
&\psi(Y)=\Phi^{-1}_N \circ  F_{E(\phi(X)|Y)} (E(\phi(X)|Y))
\end{align}
  
As in ACE, these necessary conditions imply an alternating projection algorithm, namely, the Alternating Gaussinized Conditional Expectation (AGCE). Here, we begin by randomly choosing a transformation that only satisfies the normality constraint $\psi(Y) \sim N(0,1)$. Then, we iterate by fixing one of the transformation while optimizing the other, according to (\ref{necessary_conditions}). We terminate once $E(\phi(X)\psi(Y))$ fails to increase, which means that we converged to a set of transformations that satisfy the necessary conditions for optimal solution.  Notice that in every step of our procedure, we may either:

\begin{enumerate}
\item	Increase our objective value, as a result of the optimal map for (\ref{AGCE problem}).
\item	Maintain with the same objective value and with the same transformation that was found in of the previous iteration, as we converged to  (\ref{necessary_conditions}).  
\end{enumerate}
This means that our alternating method generates a monotonically increasing sequence of objective values. Moreover, as shown in Section \ref{The 1-D case}, this sequence is bounded from above by the optimal correlation given by ACE. Therefore, according to the monotone convergence theorem, our suggested method converges to a local optimum. 

Unfortunately, as opposed to ACE, our projection operator is not linear and we cannot claim for global optimality. We see that for different random initializations we converge to (a limited number) of local optima. Yet, AGCE provides an effective tool for finding local maximizers of (\ref{The 1-D case}), which together with MCMC \citep{gilks2005markov} initializations (or any other random search mechanisms) is capable of finding the global optimum.

\subsection{Off-shelf lower bound}
\label{offshelf_lower_bound}
Although the AGCE method provides a (locally) optimal solution to (\ref{The 1-D case}), we would still like to consider a simpler ``off-shelf" mechanism that is easier to implement and gives a lower bound to the best we can hope for. Here, we tackle (\ref{The 1-D case}) in two phases. In the first phase we would like to maximize the correlation objective, $E(UV)$, subject to the relaxed second order statistics constraints (as defined in (\ref{ACE problem})). Then, we enforce the marginal normality constraints by simply applying \textit{separate Gaussianization} to the outcome of the first phase. In other words, we first apply ACE to increase our objective as much as possible, and then separately Gaussianize the results to meet the normality constraints, hoping this process does not reduce our objective ``too much". Notice that in this univariate case, separate Gaussianization is achieved according to Theorem \ref{gaussianization_theorem}:
\begin{theorem}
\label{gaussianization_theorem}
Let $X$ be any random variable $X \sim F_X (x)$  and $\theta \sim \text{Unif}[0,1]$ be statistically independent of it. In order to shape $X$ to a normal distribution the following applies:
\begin{enumerate}
\item	Assume $X$ is a non-atomic distribution ($F_X (x)$ is strictly increasing) then  $\Phi^{-1}_N \circ F_X(X)\sim N(0,1)$
\item	Assume $X$ is discrete or a mixture probability distribution then  $\Phi^{-1}_N \circ \left( F_X(X)-\theta P_X(x)\right) \sim N(0,1)$
\end{enumerate}
\end{theorem}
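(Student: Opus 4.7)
The plan is to handle the two parts separately, with part (1) being a direct application of the probability integral transform and part (2) requiring its generalization via the so-called distributional transform, where the auxiliary uniform $\theta$ serves exactly to "randomize" across atoms.

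For part (1), I would argue as follows. Since $F_X$ is continuous and strictly increasing on the support of $X$, it admits a continuous inverse $F_X^{-1}$ and for any $u \in [0,1]$
\[
P\bigl(F_X(X) \leq u\bigr) = P\bigl(X \leq F_X^{-1}(u)\bigr) = F_X\bigl(F_X^{-1}(u)\bigr) = u,
\]
so $F_X(X) \sim \mathrm{Unif}[0,1]$. Then, because $\Phi_N$ is itself a strictly increasing continuous CDF, the inverse-CDF sampling identity gives $\Phi_N^{-1}(U) \sim N(0,1)$ for any $U \sim \mathrm{Unif}[0,1]$, which yields the claim.

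For part (2), the heart of the argument is to show that $W := F_X(X) - \theta P_X(X) \sim \mathrm{Unif}[0,1]$; after that, another application of the inverse-CDF identity to the standard normal finishes it. Write the law of $X$ as a mixture of a continuous part $\mu_c$ and a (at most countable) collection of atoms $\{x_i\}$ with masses $p_i = P_X(x_i) = F_X(x_i) - F_X(x_i^-)$. Then I would condition on which component $X$ falls in. On the continuous part, $P_X(X) = 0$ almost surely, so $W = F_X(X)$ reduces to the setting of part (1); its conditional distribution is uniform on the union of gaps between atoms in $[0,1]$ with the correct total mass. Conditional on the event $\{X = x_i\}$, $W = F_X(x_i) - \theta p_i$ is an affine function of $\theta \sim \mathrm{Unif}[0,1]$, hence uniformly distributed on the interval $[F_X(x_i^-), F_X(x_i)]$ of length $p_i$. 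Summing contributions by the law of total probability, the density of $W$ equals $1$ on all of $[0,1]$, since the intervals $[F_X(x_i^-), F_X(x_i)]$ together with the image of the continuous part partition $[0,1]$ up to a null set and each contributes its own length as weight.

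The main obstacle is the bookkeeping in part (2): making sure the tiling of $[0,1]$ by the atom intervals $[F_X(x_i^-), F_X(x_i)]$ and the continuous-part image is watertight when $X$ has a genuinely mixed distribution, and verifying that the conditional-uniformity arguments paste together without double-counting on the boundary points $F_X(x_i^-), F_X(x_i)$ (which is harmless, as these form a countable set of measure zero). Once that is in place, combining with the inverse-CDF step is immediate, and the two parts of the theorem follow.
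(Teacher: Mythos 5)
Your proof is correct, but it is worth noting that the paper does not actually prove this theorem at all: it simply defers to Appendix~1 of the cited reference (Shayevitz et al.), so any self-contained argument is already ``different'' from what the paper offers. Your route --- the probability integral transform for part~(1) and a component-wise analysis of the randomized (distributional) transform for part~(2) --- is the standard one and is sound: conditioning on the atom $\{X=x_i\}$ gives $W=F_X(x_i)-\theta p_i$ uniform on $[F_X(x_i^-),F_X(x_i)]$, and the continuous component fills the complementary gaps, so the pieces tile $[0,1]$ up to a null set. The only place requiring care is the one you flag yourself: justifying that the pushforward of the continuous component under $F_X$ is exactly Lebesgue measure on the complement of the atom intervals (this needs $F_X$ to increase by precisely the continuous mass between consecutive atoms, which is true but deserves a line). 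If you want to avoid that bookkeeping entirely, observe that since $1-\theta\sim\theta$, one has $F_X(X)-\theta P_X(X)\stackrel{d}{=}F_X(X^-)+\theta P_X(X)$, which is the classical distributional transform; its uniformity can then be verified in one stroke by computing $P(W\le u)$ at the quantile point $q(u)=\inf\{x:F_X(x)\ge u\}$, splitting only on whether $q(u)$ is an atom. Either way, composing with $\Phi_N^{-1}$ finishes the proof as you say.
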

\noindent The proof of this theorem can be located in Appendix $1$ of \citep{shayevitz2011optimal}. Theorem \ref{gaussianization_theorem} implies that if $X$ is strictly continuous then we may achieve a normal distribution by applying $\Phi^{-1}_N \circ F_X(X)$ to it, as discussed in the previous section. Otherwise, we shall handle its CDF's singularity points by  randomly scattering them in a uniform manner, followed by applying  $\Phi^{-1}_N$ to the random variable we achieved. Notice that this process do not allow any flexibility in the Gaussianization process. However, we show that in the multivariate case (Section (\ref{multivariate LB})) the equivalent process is quite flexible and allows us to control the correlation objective.

Further, notice that this lower bound is by no means a candidate for an optimal solution to (\ref{1-D problem}), as it does not meet the necessary conditions described in (\ref{necessary_conditions}). Yet, by finding both an upper and lower bounds (through ACE, and then separately Gaussianizing the result of ACE) we may immediately achieve the range in which the optimal solution necessarily resides. Assuming this range is not too large, one may settle for a sub-optimal solution without a need to apply AGCE at all.  

\subsection{Illustrative example}
\label{examples_1}
We now demonstrate our suggested methodology with a simple illustrative example. 
Let $X \sim N(0,1)$, $W \sim N(0,\epsilon^2)$ and $Z\sim N(\mu_z,1)$ be three normally distributed random variables, all independent of each other.
Let $P$ be a Bernoulli distributed random variable with a parameter $\frac{1}{2}$, independent of $X,W$ and $Z$. Define $Y$ as:
\begin{align}
Y=\left\{\begin{tabular}{ l c  }
  X+W & P=0  \\
  Z & P=1  \\
 \end{tabular}\right\}.\nonumber
\end{align}
Then, $Y$ is a balanced Gaussian mixture with parameters 
$$\theta_y=\left\{\mu_1=0, \sigma_1^2=1+\epsilon^2, \mu_2=\mu_z,  \sigma_2^2=1\right\}.$$
The joint probability density function of $X$ and $Y$ is also a balanced two-dimensional Gaussian mixture with parameters 

$$\theta_{xy}=\left\{\mu_1=\left[ \begin{tabular}{ l c  }
  0 \\
  0  \\
 \end{tabular}\right] ,C_1=\left[\begin{tabular}{ l c  }
  1 & 1  \\
  1 & 1+$\epsilon^2$  \\
 \end{tabular}\right], \mu_2=\left[ \begin{tabular}{ l c  }
  0 \\
  $\mu_z$  \\
 \end{tabular}\right] ,C_2=I\right\}. $$

\noindent Let us further assume that $\mu_z$ is large enough, and $\epsilon^2$ is small enough, so that the overlap between the two Gaussian is negligible. For example, we set $\mu_z=10$ and $\epsilon=0.1$. The correlation between $X$ and $Y$ is easily shown to be $\rho_{xy}=\frac{\sfrac{1}{2}}{\sqrt{1+\sfrac{1}{2}\epsilon^2+\sfrac{1}{4}\mu_z}}=0.098$. The mutual information between $X$ and $Y$ is defined as
$$I(X;Y)=h(X)+h(Y)-h(X;Y)$$  
Since we assume that the Gaussians in the mixture practically do not overlap, we have that
\begin{align}
h(Y)=&-\int f_Y(y)\log f_Y(y)dy \approx
\frac{1}{4}\log\left(2\pi e (1+\epsilon^2)\right)+\frac{1}{4}\log\left(2\pi e \right)+1
\end{align}
In the same manner, 
\begin{align}
h(X,Y)=&-\int f_{X,Y}(x,y)\log f_{X,Y}(x,y)dxdy \approx\\\nonumber
&\frac{1}{4}\log\left((2\pi e)^2 |C_1|\right)+\frac{1}{4}\log\left((2\pi e)^2 |C_2| \right)+1
\end{align}
Plugging $\mu_z=10$ and $\epsilon=0.1$ we have that
\begin{align}
I(X;Y)=&h(X)+h(Y)-h(X;Y) \approx 
1.66 \text{bits}.
\end{align} The scatter plot on the left of Figure \ref{one_d_example_1} illustrates $10,000$ independent draws of $X$ and $Y$, where the blue circles corresponds to the ``correlated samples" ($P=0$) while the blue crosses are the ``noise" ($P=1$). 

Before we proceed to apply our suggested methods, let us first examine two benchmark options for separate Gaussianization. As an immediate option, we may always apply separate Gaussianization, directly to $X$ and $Y$, denoted as $U_a$ and $V_a$ respectively. This corresponds to \cite{cardoso2003dependence} information geometry bound. Since $X$ is already normally distributed we may set $U_a=X$ and only apply Gaussinization to $Y$. Let $V_a=\psi(Y)$ be the Gaussianization of $Y$. This means that $$V_a=\Phi_N^{-1}\left(F_{Y}\left(Y\right)\right)= \Phi_N^{-1}\left( \Phi_{GM(\theta_y)}(Y)\right)$$

\noindent where $\Phi_{GM(\theta_y)}$ is the cumulative distribution function a Gaussian Mixture with the parameters $\theta_y$ described above. Therefore,
$$\rho_{u_a,v_a}=E(XV)=\frac{1}{2}E\left(X \Phi_N^{-1}\left( \Phi_{GM(\theta_y)}(X+W)\right)\right).$$
Although it is not possible to obtain a closed form solution to this expectation,  it may be numerically evaluated quite easily, as $X$ and $W$ are independent. Assuming $\mu_z=10$ and $\epsilon=0.1$ we get that $\rho_{u_a,v_a} \approx 0.288$ and our lower bound on the mutual information, as appears in (\ref{basic_inequality}), is 
$I_g \equiv -\frac{1}{2}\log\left(1-\rho_{u_a,v_a}^2\right)\approx0.0628\text{bits}$. The middle scatter plot of Figure \ref{one_d_example_1} presents this separate marginal Gaussianization of the previously drawn $10,000$ samples of $X$ and $Y$. Notice that the marginal Gaussianization is a monotonic transformation, so that the $Y$ samples are not being shuffled and maintain the separation between the two parts of the mixture. 
While the red circles are now ``half Gaussian", the blue crosses are shaped in a curvy manner, so that their marginal distribution (projected on the $y$ axis) is also a ``half Gaussian", leading to a normal marginal distribution of $Y$.  We notice that while the mutual information between $X$ and $Y$ is $1.66$ bits, the lower bound attained by this naive Gaussianization approach is close to zero. This is obviously an unsatisfactory result.

A second benchmark alternative for separate Gaussianization is to take advantage of the Gaussian mixture properties. Since we assume that the two Gaussians of $Y$ are practically separable, we may distinguish between observations from the two Gaussians. Therefore, we can simply reduce $\mu_z$ from the $Z$ samples (the red circles), and normalize the observations of $X+W$. This way the transformed $Y$ becomes a Gaussian mixture of two co-centered standard Gaussians, and no further Gaussianization is necessary. For $\mu_z=10$ and $\epsilon=0.1$, this leads to a correlation of 
\begin{align}
\rho_{u_b,v_b}=\frac{1}{2}E\left( \frac{1}{\sqrt{1+\epsilon^2}}(X+W)X\right)=\frac{1}{2} \frac{1}{\sqrt{1+\epsilon^2}}=0.497
\end{align}
and a corresponding mutual information lower bound of $I_g=0.204 \,\text{bits}$. However, notice that the suggested transformation is not invertible and may cause a reduction in mutual information.
Specifically, we now have that the joint distribution of $U_b=X$ and $V_b$ follows a Gaussian mixture model with parameters:

$$\theta_{u_b, v_b}=\left\{\mu_1=\left[ \begin{tabular}{ l c  }
  0 \\
  0  \\
 \end{tabular}\right] ,C_1=\left[\begin{tabular}{ l c  }
  1 & $\frac{1}{\sqrt{1+\epsilon^2}}$ \\
  $\frac{1}{\sqrt{1+\epsilon^2}}$ & 1  \\
 \end{tabular}\right], \mu_2=\left[ \begin{tabular}{ l c  }
  0 \\
  0  \\
 \end{tabular}\right] ,C_2=I\right\} $$
Therefore,
\begin{align}
h(U_b,V_b)=&-\int f_{U_b,V_b}(u,v)\log \left( f_{U_b,V_b}(u,v)\right)dudv=\\\nonumber
&-\int \phi_{GN(\theta_{u_b, v_b})}(u,v)\log \phi_{GN(\theta_{u_b,v_b})}(u,v)dudv \approx 3.1384 \text{bits}
\end{align}
where $\phi_{GN(\theta_{u_b,v_b})}(u,v)$ is the probability density function of a Gaussian mixture with the parameters $\theta_{u_b, v_b}$ described above, and the last approximation step is due to numerical integration. This leads to $I(U_b;V_b)=0.95$ bits.

To conclude, although the mutual information is reduced from $1.66$ bits to $0.95$ bits, the suggested  bound increased quite dramatically, from $0.0628$ bits to $0.204$ bits. The right plot of Figure \ref{one_d_example_1} demonstrates this customized separate Gaussianization (as it only applies for this specific setup) to the previously sampled $X$ and $Y$. Again, we emphasis that this solution is not applicable in general, and is only feasible due to the specific nature of this Gaussian mixture model.

\begin{figure}[ht]
\centering
\includegraphics[width =\textwidth,bb= 90 110 710 500,clip]{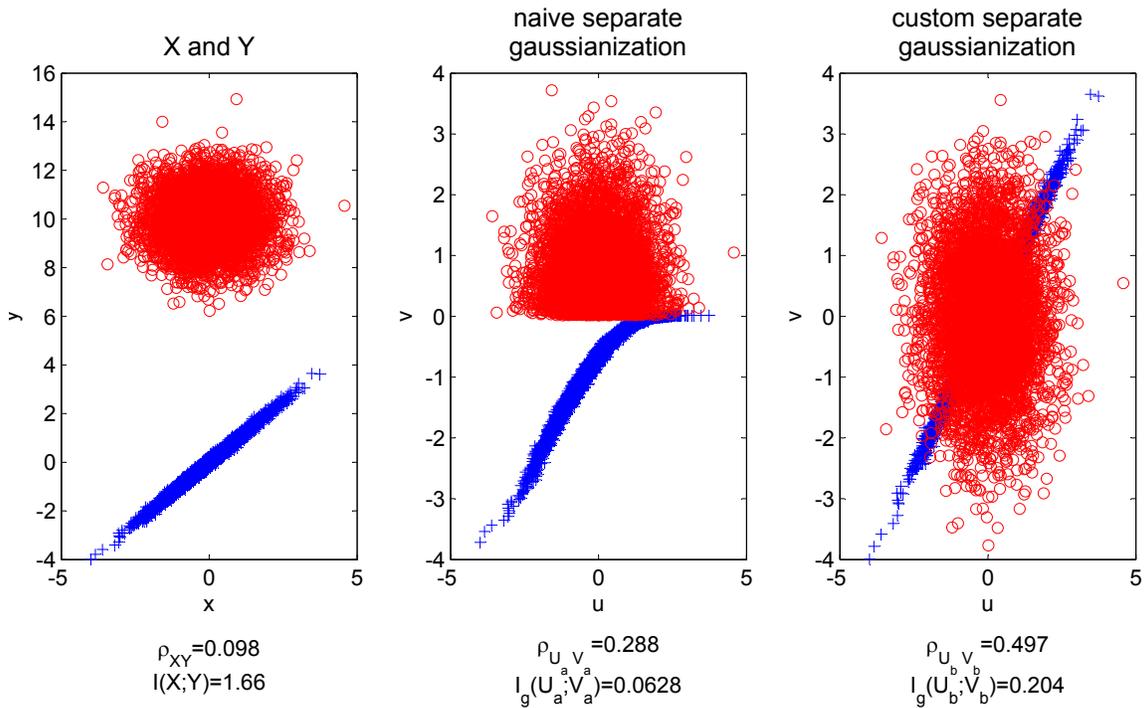}
\caption{Naive Univariate Gaussianization: Left: scatter of $X$ and $Y$, as described in the text. Middle: naive separate Gaussianization to $X$ and $Y$. Right: separate Gaussianization which considers the separable Gaussian Mixture model of  $X$ and $Y$, as described in the text.  }
\label{one_d_example_1}
\end{figure}

Let us now turn to our suggested methods, as described in detail in the previous sections. 
We begin by applying the ACE procedure (Section \ref{ACE_section}), to attain an upper bound on our problem (\ref{1-D problem}). Not surprisingly, ACE converges to a solution in which the samples of  $Y$ that are independent of $X$ (the ones that come from $Z$) are set to zero, while the rest are normalized to achieve an unit variance. Therefore, the resulting correlation is $\rho_{ub}=\frac{\sfrac{1}{2}}{\sqrt{\sfrac{1}{2}(1+\epsilon^2)}}=0.703$. This results further implies that we can never find a Gaussianization procedure that will capture all the information between $X$ and $Y$, as $I(X;Y)>-\log\left(1-\rho_{ub}^2\right)=0.4917$ bits, according to Lemma \ref{negative_lemma}.  The left scatter plot of Figure \ref{one_d_example_2} demonstrates the outcome of the ACE procedure, applied to the drawn $10,000$ samples of $X$ and $Y$.

Next, we apply our suggested AGCE routine, described in Section \ref{AGCE}. As discussed above, the AGCE only converges to a local optimum. Therefore, we initialize it with several random transformation (including the ACE solution that we just found). We notice that the number of convergence points are very limited and result in almost similar maxima. The middle scatter plot of Figure \ref{AGCE} shows the best result we achieve, leading to a correlation coefficient of $0.66$ and a lower bound on a corresponding Gaussian lower bound (\ref{basic_inequality}) of $0.411$ bits. This result demonstrates the power of our suggested approach, as it significantly improves the benchmarks, even compared with the $U_b, V_b$ that considers the separable Gaussian mixture nature of our samples. 

Finally, we evaluate a lower bound for (\ref{1-D problem}), as described in Section \ref{offshelf_lower_bound}. Here, we simply apply separate Gaussianization to the outcome of the ACE procedure. This results in $\rho_{lb}=0.646$ and a corresponding $I_g=0.389$. The right scatter plot of Figure \ref{one_d_example_2} shows the Gaussianized samples the we achieve. We notice that this lower bound is not significantly lower than AGCE, suggesting that in some case we may settle for this less involved method.

To conclude, our suggested solution surpasses the benchmarks quite easily, as we increase the lower bound from $0.204$ bits using the custom Gaussianization procedure to $0.411$ bits using our general solution. We notice that all of the discussed procedures result in a joint distribution that are quite far from normal. This is not surprising, since $X$ and $Y$ were highly ``non-normal" to begin with. Specifically, in all suggested procedures we loose information, compared with the original $I(X;Y)=1.66$. However, our suggested solution minimizes this loss, and may be considered ``more jointly normal" than others, in this regards.

\begin{figure}[ht]
\centering
\includegraphics[width =\textwidth,bb= 90 85 710 525,clip]{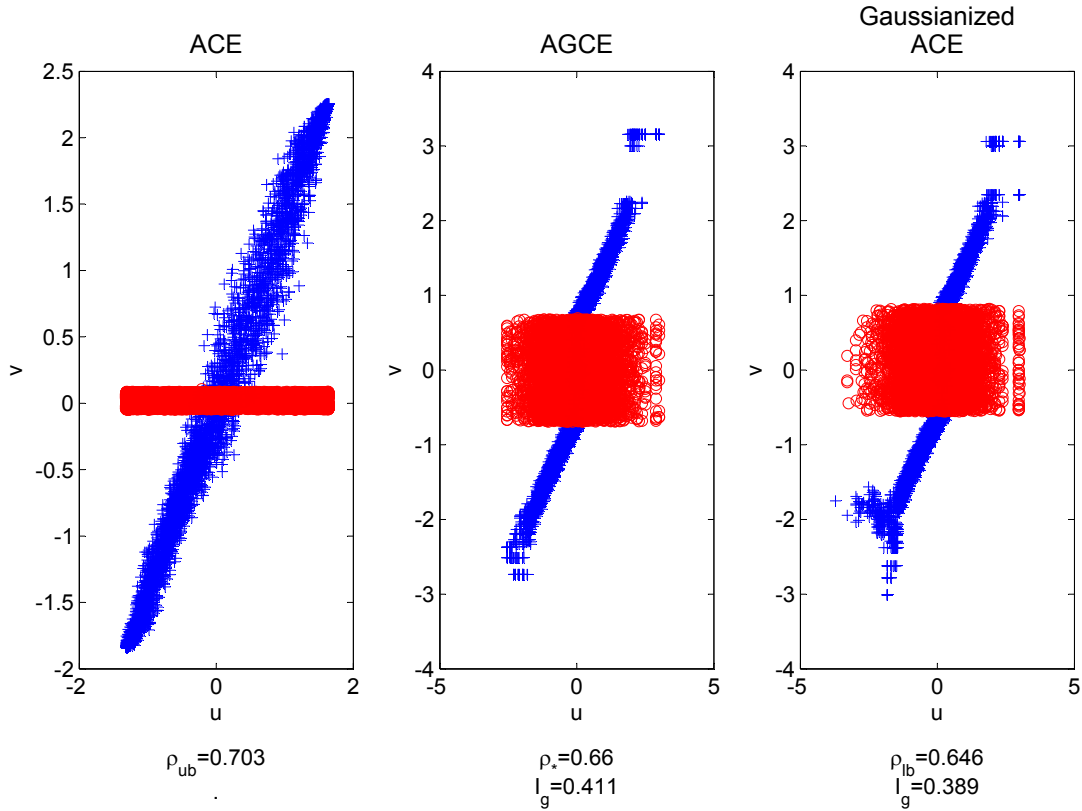}
\caption{Our Suggested Univariate Gaussianization Schemes: Left: upper bound by ACE. Middle: (local) optimal solution by AGCE. Right: lower bound by separate Guassianization to ACE.  }
\label{one_d_example_2}
\end{figure}

\section{The multivariate case} \label{multivariate case}

Let us now consider the multivariate case where both $\underline{X} \in \mathbb{R}^{d_x}$ and $\underline{Y} \in \mathbb{R}^{d_y}$ are random vectors with a joint CDF $F_{\underline{X},\underline{Y}}$. One of the fundamental differences from the univariate case is that Gaussianizing each of these vectors (even separately) is not a simple task. In other words, finding a transformation $\phi:  \mathbb{R}^{d_x} \rightarrow \mathbb{R}^{k_x}$ such that $\underline{U}=\phi(\underline{X})$ is normally distributed may be theoretically straight-forward but practically involved.

For simplicity of the presentation, assume that $\underline{X}=[X_1, X_2]^T$ is a two dimensional, strictly continuous, random vector. Then, Gaussianization may be achieved in two steps: first, apply marginal Gaussianization to $X_1$, so that $U_1=\Phi^{-1}_N \circ  F_{X_1} (X_1)$. Then, apply marginal Gaussianization on $X_2$, conditioned on each possible realization of the previous component,  $U_2|u_1 = \Phi^{-1}_N \circ  F_{X_2|U_1} (X_2|U_1=u_1)$. This results in a jointly normally distributed vector $\underline{U}=[U_1, U_2]^T$. While this procedure is theoretically simple, it is quite problematic to apply in practice, as it requires Gaussianizing each and every conditional CDF. This is obviously impossible, given a finite number of samples. Yet, it gives us a constructive method, assuming that all the CDF's are known. In the following sections we shall present several alternatives for Gaussianization in finite sample size setup.   

\subsection{Upper bound by ACE} 
\label{multivariate_ub}
As in the univariate case, we begin our analysis by relaxing the normality constraints with softer second order statistics constraints. This leads to a straight forward multivariate generalization of the ACE procedure:

We begin by extracting the first canonical pair, which satisfies $U_1=c \cdot E(V_1|\underline{X})$ and $V_1=c \cdot E(U_1|\underline{Y})$. As in the univariate case, $c$ is a normalization coefficient (the square root of the variance of the conditional expectation), and the optimization is done by alternating projections.
Then, we shall extract the second pair of canonical components, subject to an orthogonality constraint with the first pair. It is easy to show that if $V_2$ is orthogonal to $V_1$, then $U_2= c\cdot E(V_2|\underline{X})$ is orthogonal to $U_1$, and obviously maximizes the correlation with $V_2$. Therefore, we may extract the second canonical pair by first randomly assigning a zero-mean and unit variance $V_2$ that is also orthogonal to $V_1$ (by Gram-Schmidt procedure, for example), followed by alternating conditional expectations with respect to $V_2$ and $U_2$, in the same manner as we did with the first pair. We continue this way for the rest of the canonical pairs. As in the univariate case, convergence to a global maximum is guaranteed from the same Hilbert space arguments. As before, the multivariate ACE sets an upper bound to (\ref{multivariate_problem}) as it maximizes a relaxed version of this problem. 

\begin{lemma}
\label{negative_lemma_2}
Let $\underline{U}_*,\underline{V}_*$ be the outcome of multivariate ACE procedure (the canonical vectors). Assuming that $I(\underline{X};\underline{Y})>\log \left|C_{[\underline{U}_*,\underline{V}_*] }\right|$, there are no transformations such that $\underline{U}=\phi(\underline{X})$ and $\underline{V}=\psi(\underline{Y})$ follow a jointly normal distribution and preserve all of the mutual information, $I(X;Y)$.
\end{lemma}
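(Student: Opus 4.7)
The plan is to adapt the proof of Lemma~\ref{negative_lemma} to the multivariate setting, again by contradiction. I would suppose that there exist transformations $\phi,\psi$ such that $\underline{U}=\phi(\underline{X})$ and $\underline{V}=\psi(\underline{Y})$ are jointly normally distributed and preserve all of the mutual information, i.e.\ $I(\underline{U};\underline{V}) = I(\underline{X};\underline{Y})$. Because both the mutual information and the Gaussian bound~(\ref{basic_inequality}) are invariant under invertible linear scaling applied separately to $\underline{U}$ and $\underline{V}$, I normalize so that $C_{\underline{U}} = C_{\underline{V}} = I$, in line with the conventions used in~(\ref{multivariate_problem}).

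Next, I would apply orthogonal rotations on each side (which preserve joint normality as well as the identity marginal covariances) to diagonalize the cross-covariance, producing canonical correlations $\sigma_1 \ge \sigma_2 \ge \ldots \ge \sigma_k \ge 0$ between decoupled pairs of jointly Gaussian components. Under the normalization above, the Gaussian mutual information reduces to $I(\underline{U};\underline{V}) = -\tfrac{1}{2}\log\left|C_{[\underline{U},\underline{V}]}\right| = -\tfrac{1}{2}\sum_i \log(1-\sigma_i^2)$. The hypothetical configuration $(\underline{U},\underline{V})$ is also feasible for the relaxed multivariate ACE problem, which only enforces second-order statistics; hence, by the sequential optimality of the ACE canonical pairs, each $\sigma_i \le \rho^{ub}_i$ where $\rho^{ub}_i$ is the $i$-th canonical correlation of $(\underline{U}_*,\underline{V}_*)$. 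Multiplying the inequalities $1-\sigma_i^2 \ge 1-(\rho^{ub}_i)^2$ across $i$ gives $\left|C_{[\underline{U},\underline{V}]}\right| \ge \left|C_{[\underline{U}_*,\underline{V}_*]}\right|$, so $I(\underline{U};\underline{V}) \le -\tfrac{1}{2}\log\left|C_{[\underline{U}_*,\underline{V}_*]}\right|$. Interpreting the lemma's threshold in the standard Gaussian-MI sign convention, this is strictly less than $I(\underline{X};\underline{Y})$ by hypothesis, contradicting information preservation.

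The main obstacle is the pairwise domination $\sigma_i \le \rho^{ub}_i$ for every index $i$, rather than a weaker aggregate comparison like $\sum_i \sigma_i \le \sum_i \rho^{ub}_i$. This must be read off from the sequential variational characterization that underlies the multivariate ACE construction described just before the lemma: the $i$-th ACE pair maximizes $E(U_i V_i)$ over unit-variance, zero-mean transformations that are $L^2$-orthogonal to the previous $i-1$ pairs, whereas the Gaussian-constrained optimum solves the same variational problem on the strictly tighter feasible set that additionally requires marginal normality. Since tightening the feasible set can only lower the optimum, the inequality holds term-by-term. A subtlety to verify is that the orthogonal-rotation/diagonalization step in the previous paragraph is compatible with this sequential ordering, so that the $\sigma_i$'s can be aligned with the $\rho^{ub}_i$'s in matching (nonincreasing) order before monotonicity of $t \mapsto \log(1-t^2)$ delivers the determinant comparison.
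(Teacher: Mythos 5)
Your overall strategy coincides with the paper's: the paper's proof of this lemma is literally the one-sentence remark that it ``follows exactly the proof of Lemma \ref{negative_lemma}'', i.e., any jointly normal pair $(\underline{U},\underline{V})$ is feasible for the relaxed second-order problem, so its Gaussian mutual information cannot exceed the value attained by the multivariate ACE optimum, and the hypothesis then forces a strict loss of information. You reach the same contradiction, and you are right to read the threshold as $-\tfrac{1}{2}\log\bigl|C_{[\underline{U}_*,\underline{V}_*]}\bigr|$ (with normalized marginals); as literally written the determinant is at most one and the stated inequality would be vacuous. The one place where your write-up is weaker than the claim it needs is the pairwise domination $\sigma_i \le \rho^{ub}_i$: the nested-feasible-set argument you give only covers $i=1$, because for $i\ge 2$ the $i$-th component of your hypothetical Gaussian configuration is constrained to be orthogonal to \emph{its own} first $i-1$ components, not to the ACE components, so the two feasible sets are not nested and ``tightening the constraints'' does not directly apply. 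The domination is nevertheless true --- $\rho^{ub}_i$ is the $i$-th singular value of the conditional-expectation operator between the zero-mean $L^2$ spaces, and the $\sigma_i$ are singular values of its compression by the isometries $\mathbb{R}^k \to L^2$ induced by $\phi,\psi$, so a Courant--Fischer (min-max) argument gives $\sigma_i \le \rho^{ub}_i$ --- but that is the argument you must invoke, and you correctly flagged this as the open subtlety. Alternatively, you can bypass the individual canonical correlations entirely, as the paper implicitly does: the asserted equivalence between problem (\ref{basic problem}) and problem (\ref{multivariate_problem}) already says that ACE maximizes the log-determinant objective over the relaxed feasible set, so for any feasible jointly normal pair $-\tfrac{1}{2}\log\bigl|C_{[\underline{U},\underline{V}]}\bigr| \le -\tfrac{1}{2}\log\bigl|C_{[\underline{U}_*,\underline{V}_*]}\bigr| < I(\underline{X};\underline{Y})$, which is the whole proof.
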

The proof of Lemma \ref{negative_lemma_2} follows exactly the proof of Lemma \ref{negative_lemma}. Here again, the multivariate ACE objective, $\log \left|C_{[\underline{U}_*,\underline{V}_*] }\right|$, cannot be further increased by artificially inflating the dimension of the problem. Therefore, Lemma \ref{negative_lemma_2} holds for any $\phi:\mathbb{R}^{d_x}\rightarrow \mathbb{R}^{k_x}$ and $\psi:\mathbb{R}^{d_y}\rightarrow \mathbb{R}^{k_y}$, such that $k_x,k_y \geq 0$.  

\subsection{multivariate AGCE}
\label{multivariate AGCE}
As with the multivariate ACE, we propose a generalized multivariate procedure for AGCE. We begin by extracting the first pair, in the same manner as we did in the univariate case. That is, we find a pair $U_1$ and $V_1$ that satisfies 
\begin{align}
\label{necessary_conditions_2}
&U_1=\Phi^{-1}_N \circ  F_{E(U_1|\underline{X})} (E(U_1|\underline{X}))\\\nonumber
&V_1=\Phi^{-1}_N \circ  F_{E(V_1|\underline{Y})} (E(V_1|\underline{Y}))
\end{align} 
by applying the alternating optimization scheme. As we proceed to the second pair, we require that $U_2$ is both orthogonal and jointly normally distributed with $U_1$ (same goes for $V_2$ with respect to $V_1$). This means that the second pair needs not only to be orthogonal, but also statistically independent with the first pair. In other words, assuming $V_2$ is fixed, our basic projection step is

 \begin{equation}
\label{AGCE problem 2} 
\begin{aligned}
& {\max_{\phi_2}}
& & E \left(\phi_2(\underline{X}) V_2\right) \\
& \text{subject to}
& & \phi_2(\underline{X}) \sim N(0,1)\\
& && \phi_2(\underline{X}) \indep \phi_1(\underline{X}) \\ 
\end{aligned}
\end{equation}
Let us denote a subspace $\tilde{\underline{X}} \subset \underline{X}$ that is statistically independent of $U_1=\phi_1(\underline{X})$. Then, the problem of maximizing $E\left(\phi_2(\tilde{\underline{X}}) V_2\right)$ subject to $\phi_2(\tilde{\underline{X}}) \sim N(0,1)$ is again solved by the optimal map, $\phi_2(\tilde{\underline{X}})=\Phi^{-1}_N \circ  F_{E(V_2|\tilde{\underline{X}})} (E(V_2|\tilde{\underline{X}}))$. Therefore, the remaining task is to find the ``best" subspace $\tilde{\underline{X}} \subset \underline{X}$, so that $E\left(\phi_2(\tilde{\underline{X}}) V_2\right)$ is maximal, when plugging the optimal map.

\begin{proposition}
Let $U_1=u_1$ be the value (realization) of $U_1$. Let $\tilde{\underline{X}} = g \left(\underline{X}, u_1\right)$ be a subspace of $\underline{X}$, independent of $U_1$. If $g \left(\underline{X}, u_1\right)$ is an invertible function with respect to $\underline{X}$ given $u_1$, then $\tilde{\underline{X}}$ is an optimal subspace for maximizing $E\left(\phi_2(\tilde{\underline{X}}) V_2\right)$ subject to $\phi_2(\tilde{\underline{X}}) \sim N(0,1)$.
\end{proposition}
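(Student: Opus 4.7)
Plan: I read the proposition as the assertion that restricting the projection step (\ref{AGCE problem 2}) to $\phi_2$ being a function of $\tilde{\underline{X}}$ only (rather than of the full $\underline{X}$) does not lower the attainable value, so $\tilde{\underline{X}}$ is a ``sufficient'' subspace for the problem. I would prove this by the standard two-inclusion argument, showing that the restricted optimum is at most the optimum of (\ref{AGCE problem 2}) and conversely that it is at least as large.

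The easy inclusion is immediate. Because $\tilde{\underline{X}}\indep U_1$ by hypothesis, any $\phi_2(\tilde{\underline{X}})\sim N(0,1)$ is automatically independent of $U_1$ and hence feasible for (\ref{AGCE problem 2}); thus the restricted optimum is at most the optimum of (\ref{AGCE problem 2}).

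For the matching upper bound I would leverage the invertibility of $g$. Since $g(\underline{X},u_1)$ is invertible in $\underline{X}$ for each $u_1$, the map $\underline{X}\mapsto(\tilde{\underline{X}},U_1)$ is a bijection, and any $\phi_2(\underline{X})$ feasible for (\ref{AGCE problem 2}) can be re-expressed as $h(\tilde{\underline{X}},U_1)$ for some measurable $h$. Combining the independence $h(\tilde{\underline{X}},U_1)\indep U_1$ with $\tilde{\underline{X}}\indep U_1$ forces $h(\cdot,u_1)$ to push $F_{\tilde{\underline{X}}}$ forward to $N(0,1)$ for almost every $u_1$; that is, $\{h(\cdot,u_1)\}_{u_1}$ is a $u_1$-indexed family of Gaussianizations of $\tilde{\underline{X}}$. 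Writing
\begin{equation*}
E[h(\tilde{\underline{X}},U_1)V_2]=\int E\bigl[h(\tilde{\underline{X}},u_1)\,E(V_2\mid\tilde{\underline{X}},U_1{=}u_1)\bigr]\,dF_{U_1}(u_1),
\end{equation*}
I would apply the one-dimensional optimal-map characterization from Section \ref{AGCE} to each slice: the slice-wise optimum is $h^\star(\cdot,u_1)=\Phi_N^{-1}\circ F_{E(V_2\mid\tilde{\underline{X}},u_1)}(E(V_2\mid\tilde{\underline{X}},u_1))$. The final reduction is to show that the $U_1$-average of these slicewise optima is attained by a single transportation map depending on $\tilde{\underline{X}}$ alone, namely $\phi_2^\star(\tilde{\underline{X}})=\Phi_N^{-1}\circ F_{E(V_2\mid\tilde{\underline{X}})}(E(V_2\mid\tilde{\underline{X}}))$.

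The main obstacle is this last reduction. A naive appeal to Jensen's inequality together with the tower identity $E(V_2\mid\tilde{\underline{X}})=E(E(V_2\mid\tilde{\underline{X}},U_1)\mid\tilde{\underline{X}})$ is not immediately applicable, because the Monge map $\Phi_N^{-1}\circ F_{\cdot}(\cdot)$ is a nonlinear monotone rearrangement and does not commute with conditional expectation. I would instead attack this step through the dual variational formulation of the one-dimensional quadratic-cost optimal transport in \cite{rachev1998mass}, using $\tilde{\underline{X}}\indep U_1$ to merge the slice-wise subproblems into a single transportation problem on $\tilde{\underline{X}}$ against the standard normal target, whose minimizer is necessarily a function of $\tilde{\underline{X}}$ only. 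Once that is established, $\phi_2^\star(\tilde{\underline{X}})$ is feasible for the restricted problem and matches the objective of the original $h(\tilde{\underline{X}},U_1)$, closing the two-sided bound and proving that $\tilde{\underline{X}}$ is an optimal subspace.
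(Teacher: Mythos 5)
Your proof goes after a stronger statement than the one the paper actually proves, and the step you flag as the ``main obstacle'' is not merely delicate --- it is false in general. You aim to show that the optimum of (\ref{AGCE problem 2}), taken over all $\phi_2(\underline{X})\sim N(0,1)$ independent of $U_1$, is already attained by a function of $\tilde{\underline{X}}$ alone. Your decomposition is correct up to that point: identifying $\underline{X}$ with $(\tilde{\underline{X}},U_1)$, the feasible set of (\ref{AGCE problem 2}) is exactly the family $h(\tilde{\underline{X}},U_1)$ whose every $u_1$-slice pushes $F_{\tilde{\underline{X}}}$ forward to $N(0,1)$, so the optimum of (\ref{AGCE problem 2}) equals the $U_1$-average of \emph{decoupled} slicewise optima. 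But those slicewise optima are driven by $E(V_2\mid\tilde{\underline{X}},U_1{=}u_1)$, which genuinely varies with $u_1$, and no single map of $\tilde{\underline{X}}$ can match all slices simultaneously. Concretely: let $X_1,X_2$ be i.i.d.\ $N(0,1)$, $U_1=X_1$, $\tilde{\underline{X}}=X_2$, and let $V_2$ satisfy $E(V_2\mid X_1,X_2)=\mathrm{sign}(X_1)\,X_2$. Then $\phi_2(\underline{X})=\mathrm{sign}(X_1)X_2$ is feasible for (\ref{AGCE problem 2}) (it is $N(0,1)$ and independent of $X_1$) and attains objective $1$, whereas $E(V_2\mid X_2)=0$ forces \emph{every} $\phi_2(\tilde{\underline{X}})$ to give objective $0$. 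The gap is strictly positive, so no optimal-transport duality argument can close it; the second half of your two-sided bound is unprovable.

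The paper's own proof is far more modest and structurally different: it never compares the restricted problem to (\ref{AGCE problem 2}). It only compares $\tilde{\underline{X}}$ against other candidate subspaces $\tilde{\underline{X}}'=g'(\underline{X},u_1)$, arguing by contradiction that invertibility of $g$ lets one write $\tilde{\underline{X}}'=g'(g^{-1}(\tilde{\underline{X}},u_1))=f(\tilde{\underline{X}},u_1)$, so any $\phi_2'$ acting on $\tilde{\underline{X}}'$ is a special case of a $\phi_2$ acting on $\tilde{\underline{X}}$ and cannot do better. Your ``easy inclusion'' is sound and in the same spirit as that factoring step, but your hard direction substitutes for it a claim the paper never makes. (Your analysis does have diagnostic value: the example above lives precisely in the residual $u_1$-dependence that the paper's composed map $\phi_2'\circ f(\cdot,u_1)$ quietly retains, which is worth noting when assessing the proposition itself. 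But as a proof of the stated result, your route fails at the final reduction, and that failure is a counterexample, not a technicality.)
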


\begin{proof}
Assume there exists a different subspace $\tilde{\underline{X}}'=g' \left(\underline{X} , u_1\right)$ so that  $$\max_{\phi_2'} E\left(\phi_2'(\tilde{\underline{X}}') V_2\right)>\max_{\phi_2} E\left(\phi_2(\tilde{\underline{X}}) V_2\right)$$ subject to the normality constraint.
Since $g$ is invertible we have that $\underline{X}=g^{-1}( \tilde{\underline{X}}, u_1)$. Therefore, $\tilde{\underline{X}}'= g'\left(g^{-1}( \tilde{\underline{X}},u_1)\right) \equiv f(\tilde{\underline{X}},u_1)$. Plugging  this to the inequality above leads to  
$$\max_{\phi'_2} E\left(\phi'_2( f(\tilde{\underline{X}},u_1)) V_2\right)>\max_{\phi_2} E\left(\phi_2(\tilde{\underline{X}}) V_2\right)$$
which obviously contradicts the optimality of maximization over $\phi_2$.
\end{proof}
Therefore, we are left with finding $\tilde{\underline{X}} = g \left(\underline{X}, u_1\right)$ that is a subspace of $\underline{X}$, independent of $U_1$ and invertible with respect to $\underline{X}$ given $u_1$. For simplicity of th presentation, let us first assume that $X$ is univariate. Then, the function $g \left(X, u_1\right)= F_{X|U_1}(X|U_1=u1)$ is independent of $U_1$ (as it holds the same  (uniform) distribution, regardless to the value of $U_1$), and invertible given $u_1$ (assuming that the conditional CDF's $F_{X|U_1}(X|U_1=u1)$ are continuous for every $u_1$).
Going back to the multivariate $\underline{X}\in \mathbb{R}^{d_x}$, we may follow the same rationale by choosing a single ${d_x}$-dimensional distribution that all the conditional CDF's, $F_{\underline{X} | U_1}$ will be transformed to. For simplicity we choose a ${d_x}$-dimensional uniform distribution, denoted by its CDF as $F_{unif}$. Then, $g_* \left(F_{\underline{X}|U_1}, u_1\right) = F_{unif}$, where $g_* (P, x)=Q$ refers to a mapping that pushes forward the distribution $P$ into $Q$, given x. Specifically, if $p(w)$ and $q(w)$ are the corresponding density functions of the (absolutely continuous) CDF's $P$ and $Q$ respectively, then we know from basic probability theory that the push forward transformation $S$ satisfies
$$ p(w)=q\left(S(w)\right) |J_S\left(S(w)\right)|$$
where $J_S$ is the Jacoby operator of the map $S$.

To conclude, in order to construct $\tilde{\underline{X}}$ that is independent of $U_1$ and invertible given $u_1$, we need to push forward all the conditional CDF's $F_{\underline{X}|U_1}(\underline{X}|U_1=u_1)$ into a predefined distribution (say, uniform). Then, the optimal map $\phi_2(\tilde{\underline{X}})$ that maximizes $E\left(\phi_2(\tilde{\underline{X}}) V_2\right)$ subject to $\phi_2(\tilde{\underline{X}}) \sim N(0,1)$ is given by $\phi_2(\tilde{\underline{X}})=\Phi^{-1}_N \circ  F_{E(V_2|\tilde{\underline{X}})} (E(V_2|\tilde{\underline{X}}))$. In the same manner, we may find $\tilde{\underline{Y}}$ that is independent of $V_1$ and invertible given $v_1$, and carry on with the alternating projections. This process continues for all the Gaussinized canonical components and converges to a local optimum, from the same considerations described in the univariate case. 

It is important to notice that while this procedure may be considered practically infeasible (as it requires estimating the conditional CDF's), it is equivalently impractical as the multivariate Gaussianization considered in the beginning of this section. Yet, it gives us a local optimum for our problem, assuming that we know the joint probability distribution.

\subsection{Off-shelf lower bound in the multivariate case}
\label{multivariate LB}
In the same manner as with the univariate case, we may apply a simple off-shelf lower bound to (\ref{basic problem}) by first maximizing the objective as much as we can (using multivariate ACE) followed by Gaussianizing the outcome vectors, hoping we do not reduce the objective ``too much". However, as mentioned in the beginning of Section \ref{multivariate case}, applying multivariate Gaussianization may be practically infeasible. Therefore, we begin this section by reviewing practical multivariate Gaussianization methodologies. Then, we use these ideas to suggest a practical lower bound, which unlike the univariate case, is not oblivious to our objective. 

\subsubsection{Practical multivariate Gaussianization}  
\label{Practical multivariate Gaussianization}
The Gaussianization procedure strives to find a transformation $\underline{Z}=\mathcal{G}(\underline{X})$ so that $\underline{Z} \sim N(0,I)$. A reasonable a cost function for describing ``how Gaussian" $\underline{Z}$ really is, may be the Kullback Leibler Divergence (KLD) between $\underline{Z}$'s PDF, $f_{\underline{Z}}(\underline{z})$, and a standard normal distribution,

$$ J(\underline{Z})=D_{KL} \left(f_{\underline{Z}}(\underline{z}) || f_N(\underline{Z}) \right)=\int_{\underline{Z}} f_{\underline{Z}}(\underline{z}) \log \left( \frac{f_{\underline{Z}}(\underline{z})}{f_N(\underline{Z})}   \right)dz$$  
where $f_N(\underline{Z})$ is the PDF of a standard normal distribution. As shown by \cite{chen2001gaussianization}, $J(\underline{Z})$ may be decomposed into

\begin{equation}
\label{KLD}
J(\underline{Z})=D_{KL} \left(f_{\underline{Z}}(\underline{z}) ||\prod_{i=1}^{d_z} f_{Z_i}(z_i) \right)+\sum_{i=1}^{d_z} D_{KL} \left(f_{Z_i}(z_i) ||f_N(z)\right)
\end{equation}
where the first KLD term measures how independent are the components of $\underline{Z}$, while the second term indicates how normally distributed is each component. This decomposition led \cite{chen2001gaussianization} to an iterative algorithm. In each iteration, their suggested approach applies Independent Component Analysis \citep{hyvarinen2004independent}, to minimize the first term, followed by marginal Gaussianization of each component (as we describe for the univariate case), to minimize the second term. Chen and Gopinath show that minimizing one term does not effect the other, which leads to a monotonically decreasing procedure that converges once $\underline{Z}$ is normally distributed.

Notice that the Independent Component Analysis (ICA) is a linear operator. Therefore, if $\underline{Z}$ can be linearly decomposed into independent components, then Chen and Gopinath's Gaussianizion process converges in a single step. Moreover, notice that this Gaussianization process does not require estimating the multivariate distribution. However, it does require estimating the marginals, $f_{Z_i}$ which is, in general, considered a much easier task. 

A similar but different multivariate Gaussianization approach was suggested by \cite{laparra2011iterative}. Here, the authors propose to replace the computationally costly ICA with a simple random rotation matrix. This way, they abandon the effort of minimizing the first term of (\ref{KLD}), and only shuffle the components so that consequent marginal Gaussianization would further decrease $J(\underline{Z})$. Although this approach takes more iterations to converge to a normal distribution (as in each iteration, only the second term of   (\ref{KLD}) is being minimized), it holds several favorable properties. First, the overall run-time is dramatically shorter, since applying random rotations is much faster then linear ICA. Second, it implies a degree of freedom in choosing the rotation matrix, as the suggested random matrix is just one example of a linear shuffling of the components.  

\subsubsection{Bi-terminal multivariate Gaussianization}  
\label{Bi-terminal multivariate Gaussianization}  
Going back to our problem, we would like to Gaussinize $\underline{U}_*$ and $\underline{V}_*$, the  outcomes of the multivariate ACE procedure described above. Ideally, we would like to do so while refraining (as much as we can) from reducing our objective

\begin{equation}
\label{objective}
\log \left(\frac{\left|C_{[\underline{U}_*,\underline{V}_*] }\right|}{|C_{\underline{U}_*}||C_{\underline{V}_*}|}   \right).
\end{equation}
Following the Gaussianization procedures described in the previous section , we suggest  
an iterative process, where in each iteration we apply a rotation matrix to both vectors, followed by marginal Gaussianization to each of the components of the two vectors.
It is easy to show that (\ref{objective}) is invariant to any full rank linear transformations. However, it may be effected by the (non-linear) marginal Gaussianization of the components of  $\underline{U}_*$ or $\underline{V}_*$ (as described in Theorem \ref{gaussianization_theorem}). 
Therefore, we would like to find rotation matrices that minimize the effect of the consequent marginal Gaussianization step. 
This problem is far from trivial. In fact, due to the complicated nature of the marginal Gaussianization procedure, it is quite impossible to a-priorly minimize the effect of the marginal Gaussianization, without actually applying it and see how it behaves. Therefore, we suggest a stochastic search mechanism, which allows us to construct a ``reasonable" rotation matrix.

Our suggested mechanism works as follow: At each iteration we begin by drawing two random rotation matrices $R_1$ and $R_2$ for the two vectors we are to Gaussianize, just like \cite{laparra2011iterative}. We apply marginal Gaussianization to all the components and evaluate our objective (\ref{objective}). Then, we randomly choose two dimensions and an angle, $\theta$, and construct a corresponding rotation matrix $\tilde{R}$ that rotates the space spanned by the two dimensions in $\theta$ degrees.
We apply  $\tilde{R} \cdot R_1$ to our vector, followed by marginal Gaussianization, and again evaluate (\ref{objective}). If the objective increases we assign  $R_1=\tilde{R} \cdot R_1$. We repeat this process a configurable number of times, for the two vectors we are to Gaussianize.

Notice that our suggested procedure applies a stochastic hill climbing search in each step: it randomly searches for the best rotation matrix by gradually composing ``small" rotation steps (of two dimensions and an angle), as the complete search space is practically infinite. This procedure guarantees the convergence into two multivariate normal vectors, as shown by \cite{laparra2011iterative}, under the reasonable assumption that $R_1$ and $R_2$ do not repeatedly converge to identity matrices.


As we see in our experiments, the Bi-terminal Gaussianization is superior to naively applying a Gaussianization procedure to each of the vectors separately  (as suggested by \cite{chen2001gaussianization} or \cite{laparra2011iterative}), in all the cases we examine.

\subsection{Illustrative examples}
\label{multivariate examples}
We now examine our suggests multivariate approach in different setups. As in the univariate case, we draw samples from a given model and bound from below the mutual information $I(\underline{X}, \underline{Y})$ according to (\ref{basic_inequality}). First, we apply the multivariate ACE procedure  (Section \ref{multivariate_ub}) to achieve an upper bound for our objective. Then, we apply 
separate Gaussianization to ACE's outcome, to attain an immediate lower bound for our objective (Section \ref{Practical multivariate Gaussianization}). Further, we tighten this lower bound by replacing the separate Gaussianization with bi-terminal Gaussianization to ACE's outcome (Section \ref{Bi-terminal multivariate Gaussianization}). Since our multivariate AGCE procedure (Section \ref{multivariate AGCE}) is practically infeasible, we refrain from using it. This would be further justified later in our results, as we see that the gap between the lower and upper bounds is relatively small. In all of our experiments, our benchmark would be a direct separate Gaussianization of $\underline{X}$ and $\underline{Y}$, as an immediate alternative. 

We begin with a simple toy example. Let $\underline{X} \sim N(0,I)$ and $\underline{W}  \sim N(0,I)$ be independent random vectors. Define $\underline{Y}=\underline{X}+\underline{W}$, so that $\underline{X}$ and $\underline{Y}$ are jointly normally distributed. Further,  we ``scramble" $\underline{X}$ and $\underline{Y}$ by applying invertible, yet non-monotonic, transformations to each of them separately. 
We ask that the transformations are invertible to guarantee that the (analytically derived) mutual information is preserved. We further require non-monotonic transformations since marginal Gaussianization is invariant to monotonic functions (see Proposition \ref{prop1}), which would make this experiment too easy. In this experiment,  we multiply all the observations in the range $[-1,1]$ by $-1$. This operation simply mirrors these observations with respect to the origin.

\begin{proposition}
\label{prop1}
Let $\tilde{X}=g(X)$ be a monotonic transformation on $X \in \mathbb{R}$. Then Gaussinizing $\tilde{X}$ is equivalent to Gaussinizing $X$.
\end{proposition}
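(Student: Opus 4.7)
The plan is to invoke Theorem~\ref{gaussianization_theorem} and exploit the fact that the CDF of a monotonic transformation of $X$ is directly expressible in terms of $F_X$. I will focus on the strictly continuous case first, and then remark on what changes when $X$ has atoms.

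First, I would assume without loss of generality that $X$ is non-atomic, so that $\Phi_N^{-1}\!\circ F_X(X)\sim N(0,1)$ by part~(1) of Theorem~\ref{gaussianization_theorem}. I would split on the direction of the monotonicity of $g$. If $g$ is strictly increasing, then for every $\tilde x$ in the range of $g$,
\begin{equation*}
F_{\tilde X}(\tilde x) \;=\; P(g(X)\leq \tilde x) \;=\; P(X\leq g^{-1}(\tilde x)) \;=\; F_X(g^{-1}(\tilde x)),
\end{equation*}
so substituting $\tilde x = g(x)$ gives $F_{\tilde X}(\tilde X)=F_X(X)$ pointwise. Applying $\Phi_N^{-1}$ to both sides yields
\begin{equation*}
\Phi_N^{-1}\!\circ F_{\tilde X}(\tilde X) \;=\; \Phi_N^{-1}\!\circ F_X(X),
\end{equation*}
which is precisely the claim that Gaussianizing $\tilde X$ produces the same random variable as Gaussianizing $X$.

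For the strictly decreasing case, the same computation gives $F_{\tilde X}(\tilde X)=1-F_X(X)$. Since $U\equiv F_X(X)\sim\text{Unif}[0,1]$ implies $1-U\sim\text{Unif}[0,1]$ as well, and by symmetry of the standard normal $\Phi_N^{-1}(1-u)=-\Phi_N^{-1}(u)$, one obtains $\Phi_N^{-1}\!\circ F_{\tilde X}(\tilde X) = -\,\Phi_N^{-1}\!\circ F_X(X)$. Both produce the same $N(0,1)$ distribution, and the sign flip is immaterial for our correlation objective (it only flips the sign of the corresponding canonical correlation, which we are free to reverse by composing with a sign). In this sense the two Gaussianizations are equivalent.

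The main obstacle is the atomic/mixed case covered by part~(2) of Theorem~\ref{gaussianization_theorem}: if $X$ has atoms then monotonic $g$ maps atoms to atoms with matching masses, $P_{\tilde X}(g(x))=P_X(x)$, and one must verify that the randomized Gaussianization $\Phi_N^{-1}\!\circ(F_{\tilde X}(\tilde X)-\theta P_{\tilde X}(\tilde X))$ produces the same variable as $\Phi_N^{-1}\!\circ(F_X(X)-\theta P_X(X))$ when the same auxiliary $\theta\sim\text{Unif}[0,1]$ is used. This follows from $F_{\tilde X}(g(x))=F_X(x)$ on continuity points combined with the identity of atomic masses, which reduces the atomic case to the same argument as above (up to a sign flip in the decreasing case). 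With this the proposition follows from Theorem~\ref{gaussianization_theorem} without further computation.
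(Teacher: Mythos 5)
Your proof is correct and follows essentially the same route as the paper's: compute $F_{\tilde X}$ using the monotonicity of $g$ and conclude $F_{\tilde X}(\tilde X)=F_X(X)$, so the two Gaussianizations coincide. You are in fact more careful than the paper, which only says an ``equivalent derivation'' handles the decreasing case --- you correctly note that there the Gaussianization comes out as $-\Phi_N^{-1}\!\circ F_X(X)$ (a harmless sign flip), and you also address the atomic case that the paper leaves implicit.
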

\begin{proof}
Let $V=\Phi_N^{-1}\left(F_{{X}}\left({X}\right)\right)$ be the Gaussianization $\tilde{X}$ and $V=\Phi_N^{-1}\left(F_{{X}}\left({X}\right)\right)$ is the Gaussianization of $X$. Assume that $g$ is monotonically increasing. Then, 
$$F_{\tilde{X}}(a)=P(\tilde{X} \leq a)=P(g(X) \leq a)=P(X \leq g^{-1}(a)).$$ Therefore,  $F_{\tilde{X}}\left(\tilde{X}\right)=F_{X}\left(g^{-1}(\tilde{X})\right)=F_{X}(X)$ and $\tilde{V}=V$. An equivalent derivation holds for the monotonically decreasing case. 
\end{proof} 

Before we proceed, it is important to briefly comment on the implications of the finite sample size in our multivariate experiments. The ACE procedure estimates conditional expectations at each of its iterations. This estimation task is known to be quite challenging in a finite sample size regime. \cite{breiman1985estimating} suggest a \textit{k nearest neighbor} estimator which guarantees favorable consistency properties. Unfortunately, this solution is suffers from the curse of dimensionality \citep{hastie2005elements}. Therefore, as the dimension of our problem increases, we cannot turn to ACE and have to settle for suboptimal solutions. In our experiments, we use the kernel CCA \citep{lai2000kernel} as an alternative to ACE when the dimension size is greater than $d=5$. The kernel CCA (KCCA) is a non-linear generalization to the classical CCA which embeds the data in a high-dimensional Hilbert space and applies CCA in that space. It is known to significantly improve the flexibility of CCA while avoiding over-fitting of the data. Notice that other non-linear CCA extensions, such as \textit{Deep CCA} \citep{andrew2013deep} or \textit{nonparametric CCA} \citep{michaeli2015nonparametric}, may also apply as a finite sample size alternative to ACE.

We now demonstrate our suggested approach to the jointly Gaussian model discussed above.  The left plot of Figure \ref{high_d_example} shows the results we achieve for different dimension sizes $d$. The black line on the top is $I(\underline{X}, \underline{Y})$, which can be analytically derived. The red curve with the squares at the bottom is separate Gaussianization of $\underline{X}$ and $\underline{Y}$, which results in a very poor lower bound to the mutual information due to the non-monotonic nature of the transformation that we apply. The green curve with the squares is ACE, while the dashed blue curve is separate Gaussianization of ACE. Finally, the blue line between them is bi-terminal Gaussianization of ACE. As we can see, ACE succeeds in recovering the jointly Gaussian representation of $\underline{X}$ and $\underline{Y}$, which makes further Gaussianization redundant. Unfortunately, for $d >5$ we can no longer apply ACE and turn to KCCA instead. We use a Gaussian kernel with varying parameters to achieve the reported results. Since the KCCA attains a suboptimal representation it is followed by Gaussianization, which further decreases our objective. Here, we notice the improved effect of the bi-terminal Gaussianization, compared with separate Gaussianization. 

Next, we turn to a more challenging exponential model. In this model, each component of $\underline{X}$ and $\underline{W}$ is exponentially distributed with a unit parameter, while all the components are independent of each other. Again, we define $\underline{Y}=\underline{X}+\underline{W}$ so that $\underline{Y}$ is Gamma distributed. This allows us to analytically derive $I(\underline{X}, \underline{Y})$. As before, we apply an invertible non-monotonic transformation to each of the components of $\underline{X}$ and $\underline{Y}$. Notice that this time we mirror the observations in the range $[0,2]$ with respect to $1$. We then apply a linear rotation, so that the components are no longer independent. The plot in the middle of Figure \ref{high_d_example} demonstrates the results we achieve. As before, we notice that separate Gaussianzation of $\underline{X}$ and $\underline{Y}$ preforms very poorly. On the other hand, ACE as well does not succeed in achieving this MI. This means that no Gaussianzation procedure would allow jointly normal representation of $\underline{X}$ and $\underline{Y}$  without losing information (Lemma  \ref{negative_lemma_2}). Still, by applying bi-terminal Gaussianization to ACE's results we are able to capture more than half of the information in the worst case (for  $d=5$, where ACE still applies). As before, we witness a reduction of performance when turning from ACE to KCCA.

Finally, we go back to the multivariate extension of the Gaussian mixture model described in Section \ref{examples_1} and apply our suggested procedures. Again,  we witness the same behavior described in the previous experiments. In addition, our results indicate that in this model, the Gaussian part of the MI is significantly smaller, compared with the exponential model. This further demonstrates the ability of our method to quantify how well an arbitrary distribution may be represented as jointly normal.

\begin{figure}[ht]
\centering
\includegraphics[width =0.9\textwidth,bb= 80 140 720 470,clip]{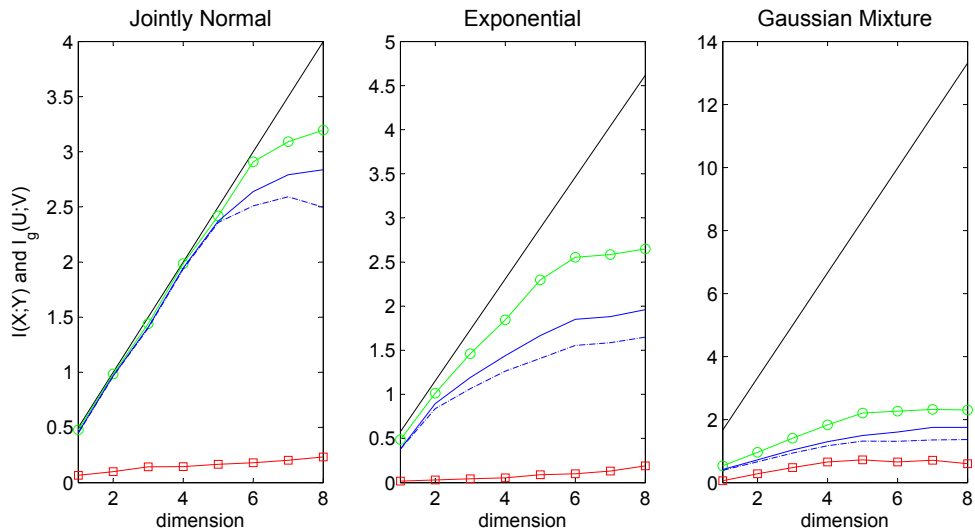}
\caption{Multivariate Gaussianization experiments: The black line on the top of each plot is  $I(\underline{X}, \underline{Y})$. The red curve with the squares at the bottom is separate Gaussianization of $\underline{X}$ and $\underline{Y}$. The green curve with the squares is ACE, while the dashed blue curve is separate Gaussianization of ACE. The blue line in between is bi-terminal Gaussianization of ACE.}
\label{high_d_example}
\end{figure}


\section{Gaussian lower bound for the Information Bottleneck Curve} \label{Gaussian lower bound for the Information Bottleneck Curve}
We now extended our derivation to the Information Bottleneck (IB) curve.
We show that by maximizing the Gaussian lower bound of the mutual information (\ref{basic_inequality}), we allow a maximization of a Gaussian lower bound to the entire IB curve.  We prove this in two steps. First, we show that the IB curve of $\phi(\underline{X}), \psi(\underline{Y})$ bounds from below the IB curve of $X$ and $Y$, for any choice of $\phi, \psi$ (specifically, $\phi(\underline{X}) \sim N$ and $\psi(\underline{Y}) \sim N$, in our case). This property is referred to as the \textit{data processing lemma for the IB curve}. Then, we show that the IB curve of jointly normal random variables bounds from below the IB curve of separately normal random variable. Finally, by applying the GIB \citep{chechik2005information} to the maximally correlated jointly normal random variables that satisfy (\ref{basic_inequality}), we attain the desired Gaussian lower bound for the IB of $\underline{X}$ and $\underline{Y}$.

\begin {lemma}
(data processing lemma for the IB Curve): Denote the maximizer of the IB problem (\ref{IB problem})
as $I_*^\beta \left(\underline{X}; \underline{Y}\right)$. Then, $I_*^\beta \left(\underline{X}; \underline{Y}\right) \geq I_*^\beta \left(\phi(\underline{X}); \psi(\underline{Y})\right)$ for any $\phi, \psi$, and with equality iff $I\left(\underline{X}; \underline{Y}\right) = I\left(\phi(\underline{X}); \psi(\underline{Y})\right)$.
\end{lemma}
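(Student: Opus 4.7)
The plan is to show that every conditional kernel $P(T \mid \phi(\underline{X}))$ that is admissible for the IB problem on the transformed pair $(\phi(\underline{X}), \psi(\underline{Y}))$ lifts to an admissible kernel for the original pair $(\underline{X}, \underline{Y})$ whose compression rate $I(T;\underline{X})$ is unchanged and whose predictive term $I(T;\underline{Y})$ can only increase. The monotone comparison between the two IB curves then follows by taking infima over feasible representations.

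Concretely, I would fix any kernel $P(T \mid \phi(\underline{X}))$ that attains the IB optimum at Lagrange parameter $\beta$ for $(\phi(\underline{X}), \psi(\underline{Y}))$ and lift it to $\underline{X}$ by composition, setting $P(T \mid \underline{X}=\underline{x}) := P(T \mid \phi(\underline{X}) = \phi(\underline{x}))$. By construction $T - \phi(\underline{X}) - \underline{X}$ is a Markov chain, and since $\phi$ is deterministic the data processing inequality applied in both directions yields $I(T;\underline{X}) = I(T;\phi(\underline{X}))$; a single application of DPI using the fact that $\psi(\underline{Y})$ is a function of $\underline{Y}$ gives $I(T;\underline{Y}) \geq I(T;\psi(\underline{Y}))$. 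Thus the lifted representation solves the IB problem on $(\underline{X}, \underline{Y})$ at the same compression and with at least as much relevant information, so $I_*^\beta(\underline{X}; \underline{Y}) \geq I_*^\beta(\phi(\underline{X}); \psi(\underline{Y}))$.

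For the equality clause, the ``only if'' direction follows by letting $\beta \to \infty$, so that each IB curve saturates at its own joint mutual information and equality of the curves forces $I(\underline{X};\underline{Y}) = I(\phi(\underline{X});\psi(\underline{Y}))$. For ``if'', the hypothesis $I(\underline{X};\underline{Y}) = I(\phi(\underline{X});\psi(\underline{Y}))$ collapses the chain $I(\phi(\underline{X});\psi(\underline{Y})) \leq I(\phi(\underline{X});\underline{Y}) \leq I(\underline{X};\underline{Y})$, forcing the sufficiency Markov chain $\underline{X} - \phi(\underline{X}) - \psi(\underline{Y}) - \underline{Y}$. Under this structure, any optimal $T^*$ for the original IB can be shown to factor as $T^* - \phi(\underline{X}) - \underline{X}$ on the compression side and $T^* - \psi(\underline{Y}) - \underline{Y}$ on the predictive side, so the compression and predictive terms literally agree between the two problems and the bound is achieved with equality.

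The step I expect to be the main obstacle is this last one: verifying that, under the sufficiency Markov chain, an optimal $T^*$ may be reduced to one that already factors through $\phi(\underline{X})$ without loss, so the reduction is \emph{lossless at every} $\beta$ and not just at the saturation point. I would handle it by a direct manipulation of the conditional kernel of $T^*$ given $\phi(\underline{X})$, showing that replacing $T^*$ by this kernel preserves both $I(T^*;\underline{X})$ and $I(T^*;\underline{Y})$ thanks to the sufficiency chain; the remainder is routine bookkeeping on the Markov structure.
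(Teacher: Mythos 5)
Your proof of the main inequality is essentially the paper's argument: you lift any admissible kernel through $\phi$ (so that $I(T;\underline{X})=I(T;\phi(\underline{X}))$ by the two-sided data-processing step) and apply the DPI once more for $\psi(\underline{Y})$, whereas the paper splits the same idea into two stages and phrases the $\phi$-stage as a proof by contradiction in the dual (minimum-compression) formulation. You additionally prove the ``equality iff'' clause --- saturation at $\beta\to\infty$ for ``only if,'' and the sufficiency Markov chain $\underline{X}-\phi(\underline{X})-\psi(\underline{Y})-\underline{Y}$ together with the standard sufficient-statistic reduction of $T^*$ for ``if'' --- which the paper's proof omits entirely; both halves are sound, provided ``equality'' is read as equality of the entire curves rather than at a single $\beta$ (at small $\beta$ both curves trivially coincide near the origin).
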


\begin{proof}
We prove this lemma by showing that $I_*^\beta \left(\underline{X}; \underline{Y}\right) \geq I_*^\beta \left(\underline{X}; \psi(\underline{Y})\right)  \geq I_*^\beta \left(\psi(\underline{X}); \psi(\underline{Y})\right)$.
We start with the first inequality. According to the data processing lemma, we have that  $I(T(\underline{X}); \underline{Y}) \geq I(T(\underline{X}); \psi(\underline{Y}))$. Notice that for convenience,  we emphasize that $T$ is indeed a mapping of $X$ alone.  In addition, since our constraint (\ref{IB problem}) is independent of $Y$, we have that $I_*^\beta \left(\underline{X}; \underline{Y}\right) \geq I_*^\beta \left(\underline{X}; \psi(\underline{Y})\right)$, as expected.   
Second, notice that the IB problem (\ref{IB problem}) may be equivalently written as 
\begin{equation}
\label{DPL for IB 2} 
\begin{aligned}
& {\min_{T}}
& & I(T(\underline{X}); \underline{X}) \\
& \text{subject to}
& &I(T(\underline{X}); \underline{Y}) \geq \tilde{I}_Y\\
\end{aligned}
\end{equation}
Denote the minimizer of (\ref{DPL for IB 2}) as $\bar{I}_*^{\gamma}(\underline{X};\underline{Y})$.  Assume that there exists such $\phi$ that 
\begin{equation}
\label{false_assumption}
\bar{I}_*^{\gamma}(\underline{X};\underline{Y})
>\bar{I}_*^{\gamma}(\phi(\underline{X});\underline{Y})
\end{equation}
This means that for $I(T(\underline{X}); \underline{Y}) \geq \tilde{I}_Y$ and $I(T'(\phi(\underline{X})); \underline{Y}) \geq \tilde{I}_Y$ we have that $I(T(\underline{X}); \underline{X}) >I(T'(\phi(\underline{X})); \phi(\underline{X}))$ where $T$ and $T'$ are the optimizer of (\ref{DPL for IB 2}) with respect to $(\underline{X},\underline{Y})$ and $(\phi(\underline{X}),\underline{Y}$), for a given $\tilde{I}_Y$, respectively.  Let us set $\tilde{T}\equiv T' \circ \phi$ and apply this transformation to $\underline{X}$. Then, we have that the constraint of (\ref{DPL for IB 2}) is met, as $I(\tilde{T}(\underline{X});\underline{Y})\equiv I(T'(\phi(\underline{X}));\underline{Y}) \geq \tilde{I}_Y$. In addition, we have that $$I(\tilde{T}(\underline{X});\underline{X}) \equiv I(T'(\phi(\underline{X}));\underline{X})=I(T'(\phi(\underline{X}));\phi(\underline{X}))$$
where the second equality follows from $T'$ being independent of $\underline{X}$, given $\phi(\underline{X})$.  Therefore, $\tilde{T}=T' \circ \phi$ is a better optimizer to  (\ref{DPL for IB 2}) with respect to $\underline{X}$ and $\underline{Y}$, then $T$. This contradicts the optimality of $T$ as a minimizer to (\ref{DPL for IB 2}), which means that the assumption in (\ref{false_assumption}) is false and our proof is concluded
\end{proof}

\begin {lemma}
Let $\underline{U}$ and $\underline{V}$ be separately Gaussian random vectors with a joint covariance matrix $C_{[\underline{U}, \underline{V}]}$(that is, $\underline{U} \sim N$ and $\underline{V} \sim N$ but $[\underline{U}, \underline{V}]^T$ is not normally distributed). Let $\underline{U}_{jg}, \underline{V}_{jg}$ be two jointly normally distributed random vectors with the same covariance matrix, $C_{[\underline{U}_{jg}, \underline{V}_{jg}]}=C_{[\underline{U}, \underline{V}]}$. Then, the IB curve of $\underline{U}_{jg}$ and $\underline{V}_{jg}$ bounds from below the IB curve of $\underline{U}$ and $\underline{V}$.
\end{lemma}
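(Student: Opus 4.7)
The plan is to show the IB curve of $(\underline{U}, \underline{V})$ dominates that of $(\underline{U}_{jg}, \underline{V}_{jg})$ pointwise in $\beta$, by transplanting the optimal Gaussian IB mapping from the jointly Gaussian pair onto the separately Gaussian pair; the transplanted mapping will spend exactly the same complexity but capture at least as much relevance, and then the IB optimum for $(\underline{U}, \underline{V})$ can only do better still.

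First I would invoke the GIB characterization recalled in Section \ref{Related work}: for any fixed $\beta$, the optimal IB mapping for the jointly Gaussian pair takes the form $T_{jg} = A \underline{U}_{jg} + \underline{\zeta}$, with $\underline{\zeta} \sim N(0,I)$ independent of $\underline{U}_{jg}$ and with $A$ determined only by the covariance matrix $C_{[\underline{U}_{jg},\underline{V}_{jg}]}$. I would then define the candidate mapping for the IB problem of $(\underline{U},\underline{V})$ using the same $A$ and an independent Gaussian noise of the same covariance: $T = A\underline{U} + \underline{\zeta}$.

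Next I would compare the complexity terms. Because $\underline{U}$ is marginally Gaussian with the same covariance as $\underline{U}_{jg}$, and $T$ is produced from $\underline{U}$ by a linear map plus independent Gaussian noise, the joint distribution of $(T,\underline{U})$ is genuinely jointly Gaussian with exactly the same covariance as $(T_{jg},\underline{U}_{jg})$. In particular $I(T;\underline{U}) = I(T_{jg};\underline{U}_{jg})$, so the two mappings occupy the same point on the complexity axis. Then I would compare the relevance terms. By construction, the marginal of $T$ is Gaussian (linear combination of the Gaussian $\underline{U}$ plus Gaussian noise) and the marginal of $\underline{V}$ is Gaussian by assumption, while the cross-covariance $A C_{[\underline{U},\underline{V}]}$ equals that of $(T_{jg},\underline{V}_{jg})$. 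Hence $(T,\underline{V})$ is a \emph{separately} Gaussian pair whose full covariance matches that of the jointly Gaussian pair $(T_{jg},\underline{V}_{jg})$. Applying precisely the maximum-entropy fact that underlies (\ref{basic_inequality})---Gaussians maximize joint entropy for fixed covariance, hence with fixed Gaussian marginals they minimize mutual information---yields $I(T;\underline{V}) \geq I(T_{jg};\underline{V}_{jg})$.

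Combining the two comparisons, $T$ is a feasible mapping for the IB problem of $(\underline{U},\underline{V})$ at the same complexity budget as $T_{jg}$ but with at least as much relevance, so the IB optimum satisfies
\[
I_*^\beta(\underline{U};\underline{V}) \;\geq\; I(T;\underline{V}) \;\geq\; I(T_{jg};\underline{V}_{jg}) \;=\; I_*^\beta(\underline{U}_{jg};\underline{V}_{jg}),
\]
which is the claimed pointwise lower bound on the IB curve. The main subtlety I anticipate is the complexity step: it relies crucially on $\underline{U}$ being \emph{marginally} Gaussian, since only then does adding independent Gaussian noise to $A\underline{U}$ yield a genuinely jointly Gaussian $(T,\underline{U})$ with matching covariance to $(T_{jg},\underline{U}_{jg})$, giving equality of $I(T;\underline{U})$ and $I(T_{jg};\underline{U}_{jg})$ rather than a useless inequality in the wrong direction. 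Everything else reduces to the Gaussian extremal-entropy inequality already in hand from (\ref{basic_inequality}).
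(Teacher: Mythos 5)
Your proposal is correct and follows essentially the same route as the paper's proof: transplant the optimal GIB mapping $T_{jg}=A\underline{U}_{jg}+\underline{\zeta}$ onto $\underline{U}$, use the identical marginal distribution of $\underline{U}$ and $\underline{U}_{jg}$ to get equality of the complexity term, and use the Gaussian maximum-entropy argument behind (\ref{basic_inequality}) on the matching covariance of $(T,\underline{V})$ and $(T_{jg},\underline{V}_{jg})$ to get the relevance inequality. Your explicit remark that the relevance step needs $T$ and $\underline{V}$ to be marginally Gaussian is a point the paper leaves implicit, but the argument is the same.
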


\begin{proof}
Let $\left(I(\underline{U}_{jg}; \underline{T}), I(\underline{T};\underline{V}_{jg})\right)$ be a point of the IB curve of $\underline{U}_{jg}$ and $\underline{V}_{jg}$. Since $\underline{U}_{jg}$ and $\underline{V}_{jg}$ are jointly normally distributed,  $T$ is necessarily a linear transformation of $\underline{U}_{jg}$, with additive independent Gaussian noise \citep{chechik2005information}. Specifically,  $T=A\underline{U}_{jg}+ \underline{\zeta}$, where $\underline{\zeta} \sim N(0,I)$, independent of $\underline{U}_{jg}$ and $\underline{V}_{jg}$.

Further, let $\underline{T}'=A\underline{U}+\underline{\zeta}$ be the same transformation, applied of $\underline{U}$. Since $\underline{U}$ and $\underline{V}$
are not jointly normal, the point $\left(I(\underline{U}; \underline{T}'), I(\underline{T}';\underline{V})\right)$ is below the IB curve of $\underline{U}$ and $\underline{V}$.
First, notice that 
$$I(\underline{U};\underline{T}') \equiv I(\underline{U};A\underline{U}+\underline{\zeta}) =I(\underline{U}_{jg};A\underline{U}_{jg}+\underline{\zeta})\equiv I(\underline{U}_{jg},\underline{T})$$
where the second equality follows from $\underline{U}$ and $\underline{U}_{jg}$ having the same distribution. In addition, since  $C_{[\underline{U}_{jg}, \underline{V}_{jg}]}=C_{[\underline{U}, \underline{V}]}$ we have that $C_{[A\underline{U}_{jg}+\underline{\zeta}, \underline{V}_{jg}]}=C_{[A\underline{U}+\underline{\zeta}, \underline{V}]}$. Therefore, $I(A\underline{U}+\underline{\zeta}; \underline{V}) \geq  I(A\underline{U}_{jg}+\underline{\zeta}; \underline{V}_{jg}) $, in the same manner as the in (\ref{basic_inequality}). This means that $I(\underline{T}'; \underline{V}) \geq  I(\underline{T}; \underline{V}_{jg}) $. To conclude, we showed that for the two pairs, 
$\left(I(\underline{U}_{jg}; \underline{T}), I(\underline{T};\underline{V}_{jg})\right)$ and $\left(I(\underline{U}; \underline{T}'), I(\underline{T}';\underline{V})\right)$, we have that $I(\underline{U};\underline{T}') = I(\underline{U}_{jg},\underline{T})$ while  $I(\underline{T}'; \underline{V}) \geq  I(\underline{T}; \underline{V}_{jg}) $, as desired.
\end{proof}

The two theorems above guarantee that the IB curve of $\underline{X}$ and $\underline{Y}$ is bounded from below by an IB curve of $\underline{U}_{jg}$ and $\underline{V}_{jg}$, where $C_{[\underline{U}_{jg}, \underline{V}_{jg}]}=C_{[\underline{U}, \underline{V}]}$, and $\underline{U}=\phi(\underline{X}) \sim N$, $\underline{V}=\psi(\underline{Y}) \sim N$. Therefore, in order to maximize this lower bound, one needs to maximize the correlation between $\underline{U}$ and $\underline{V}$, subject to a normality constraint, as discussed through out this manuscript. Moreover, once we have found a pair of ($\underline{U}_{jg}, \underline{V}_{jg}$) with a maximal correlation, we may directly apply the GIB to it, as shown by \cite{chechik2005information}, to achieve the optimal Gaussian lower bound IB curve for $\underline{X}$ and $\underline{Y}$.

\subsection{Examples} 

We now demonstrate our suggested Gaussian lower bound for the IB curve in two different setups. Here, we would like to compare our bound with the ``true" IB curve, and with an additional benchmark off-shelf lower bound. As discussed in Section \ref{intro}, computing the exact IB curve (for a general joint distribution) is not a simple task. This task becomes even more complicated when dealing with continuous random variables. In fact, to the best of our knowledge, all currently known methods provide approximated curves, which do not claim to converge to the exact IB curve. Moreover, these methods fail to provide any guarantees on the extent of their divergence from the true IB curve. Therefore, in our experiments, we apply the commonly used reverse annealing technique \citep{slonim2002information} in order to approximated the ``true" IB curve. The reverse annealing algorithm is initiated by computing the mutual information between $\underline{X}$ and $\underline{Y}$, which corresponds to extreme point where $I_Y \rightarrow \infty$ on the IB curve. Then, $I_Y$ is gradually decreased and the solution of the IB problem (\ref{IB problem}) with the previous value of $I_Y$ serves as a starting point to the currently solved $I_Y$. This results in a greedy ``no-regret" optimization method, which in general, fails to converge to the exact IB curve. However, in some special cases (such as the GIB), it can be shown that the optimal solution for a given $I_Y$ is, in fact, the optimal starting point for a smaller value of $I_Y$. In the general case, it is implicitly assumed to be a reasonable local optimization domain. Since the reverse annealing was originally designed for discrete random variables, we apply discretization (via Gaussian quadratures) to our probability distributions is all of our experiments.

We begin by revisiting the exponential model, described in Section \ref{multivariate examples}. In this model,  $X$ and $W$ are independent exponentially distributed random variables with a unit parameter. We define $Y=X+W$ so that $Y$ is Gamma distributed. As  in Section \ref{multivariate examples} we apply an invertible non-monotonic transformation to  $X$ and $Y$, to make this problem more challenging. Since approximating the  IB curve is involved enough for continuous random variables, we limit our attention to the simplest univariate case.  

The plot on the left of Figure \ref{IB_curves} demonstrates the results we achieve. The black curve on top is the approximated IB curve, using the reverse annealing procedure. The red curve on the bottom is a benchmark lower bound, achieved by simply applying the GIB to $X$ and $Y$, as if they were jointly Gaussian. The blue curve in the middle is our suggested Gaussian lower bound (Section \ref{AGCE}). As we can see, our suggested bound surpasses the GIB quite remarkably. This is mainly due to the non-monotonic transformation we apply, which makes the joint distribution highly non-Gaussian. We further notice that our bound is quite tight for smaller $I_Y$'s (closer to the origin) but increasingly diverges as $I_Y$ increases. The reason is that more compressed representations are more ``degenerate" and are easier to Gaussianize while maintaining reasonably high correlations. 
  
Next, we revisit the more challenging Gaussian mixture model, described in Section  \ref{examples_1}. The right plot in Figure \ref{IB_curves} demonstrates the results we achieve. As before, we notice that our suggested lower bound surpasses the naive benchmark, while demonstrating favorable performance closer to the origin. Comparing the two models, we notice that the Gaussian mixture is more difficult to bound from below using our suggested method. This result is not surprising, given the gap in our ability to bound from below the mutual information in these two models, as discussed in Section \ref{multivariate examples}.

\begin{figure}[!ht]
\centering
\includegraphics[width =0.7\textwidth,bb= 50 200 550 600,clip]{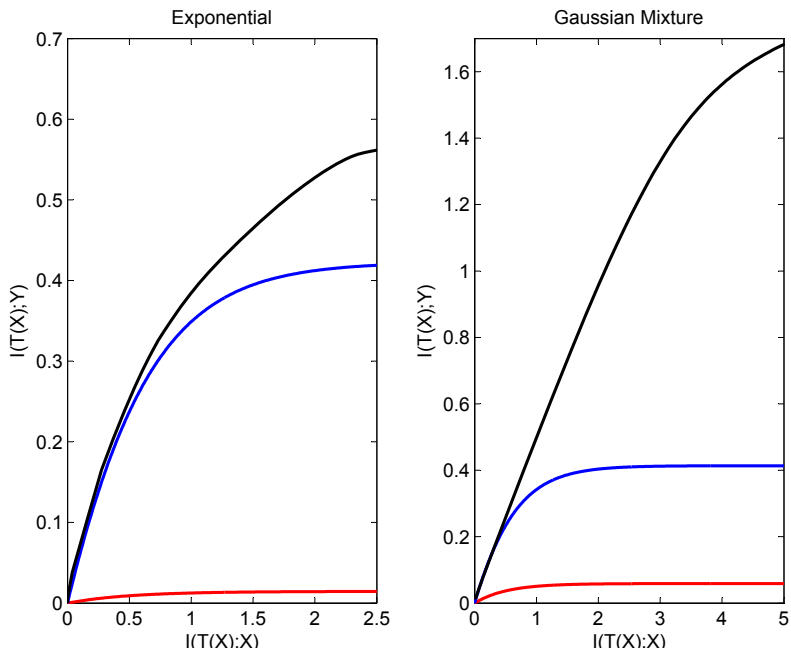}
\caption{IB curves}
\label{IB_curves}
\end{figure}

\section{Discussion and conclusion}

In this work we address the fundamental problem of normalizing non-Gaussian data, while trying to avoid loss of information. This would allow us to solve complex problems by linear means, as we push information to the data's second moments. We show that our ability to do so is strongly governed by the non-linear canonical correlations of the data. In other words, if the non-linear canonical coefficients of the data fail to maintain its mutual information, then it is impossible to describe its high order dependencies just by second order statistics. This result is of high interest to a broad variety of applications, as solving non-linear problems by linear means is a common alternative in many scientific and engineering fields. Further, we provide a variety of methods to quantify the minimal amount of information that may be lost when normalizing the data. We show that in many cases, our suggested approach is able to preserve a significant portion of the information, even for  highly non-Gaussian joint distributions. Our results improves upon \cite{cardoso2003dependence} information geometry bound, as we show that a tighter bound may be obtained by the AGCE method.

It is important to mention that while our suggested approach is theoretically found, it exhibits several practical limitation in a finite sample-size setup. This is a direct result of our use of the ACE algorithm, which suffers from the curse of dimensionality when applied to high-dimensional data. Therefore, we further examine different non-linear CCA methods, which are less vulnerable to this problem. However, these methods fail to converge to the optimal canonical coefficients. 

Finally, we show that our results may be generalized to bound from below the entire information bottleneck curve. This allows a practical alternative for different approximation methods and  restrictive solutions to the involved IB problem in the continuous case. Our experiments show that the suggested Gaussian lower bound provides a meaningful benchmark to the IB curve, even in highly non-Gaussian setups.

\section{Acknowledgments}
This research was supported by a Fellowship from the Israeli Center of Research Excellence in Algorithms to Amichai Painsky. The authors thank Nori Jacoby for early discussions on the subject.
\bibliographystyle{abbrv}
\bibliography{sigproc}

\end{document}